\mathchardef\mhyphen="2D 
\newtheorem{lemma}{Lemma}
\newtheorem{define}[lemma]{Definition}
\newtheorem{proposition}[lemma]{Proposition}
\newtheorem{theorem}[lemma]{Theorem}
\newtheorem{corollary}[lemma]{Corollary}
\newcommand{\argmin}{\mathop{\rm argmin}}
\begin{document}
%

\title{Hypergraph $p$-Laplacian: A Differential Geometry View}

\author{Shota Saito \\
The University of Tokyo\\
ssaito@sat.t.u-tokyo.ac.jp
\And
Danilo P Mandic\\
Imperial College London\\
d.mandic@imperial.ac.uk
\And
Hideyuki Suzuki\\
Osaka University\\
hideyuki@ist.osaka-u.ac.jp
}


\maketitle

\begin{abstract} 

The graph Laplacian plays key roles in information processing of relational data, and has analogies with the Laplacian in differential geometry. 
In this paper, we generalize the analogy between graph Laplacian and differential geometry to the hypergraph setting, and propose a novel hypergraph $p$-Laplacian. 
Unlike the existing two-node graph Laplacians, this generalization makes it possible to analyze hypergraphs, where the edges are allowed to connect any number of nodes. 
Moreover, we propose a semi-supervised learning method based on the proposed hypergraph $p$-Laplacian, and formalize them as the analogue to the Dirichlet problem, which often appears in physics. 
We further explore theoretical connections to normalized hypergraph cut on a hypergraph, and propose normalized cut corresponding to hypergraph $p$-Laplacian. The proposed $p$-Laplacian is shown to outperform standard hypergraph Laplacians in the experiment on a hypergraph semi-supervised learning and normalized cut setting.
\end{abstract}

\section{Introduction}
Graphs are a standard way to represent pairwise relationship data on both regular and irregular domains. One of the most important operators characterizing a graph is the graph Laplacian, which can be explained in several ways. For the example of spectral clustering~\cite{tuto}, we consider normalized graph cut~\cite{normcut,Yu03multiclasscuts}, random walks~\cite{randomwalkgraph,Gradyrandomwalk}, and analogues to differential geometry of graphs~\cite{Branin1966,Grady03,Zhou06,Bougleux07}.

Hypergraphs are a natural generalization of graphs, where the edges are allowed to connect more than two nodes~\cite{Berge84}. 
The data representation with a hypergraph is used in a variety of applications~\cite{huang2009video,liu2010robust,cellhypergraph,TanGCQBC14}. 
This natural generalization of graphs motivates us to consider a natural generalization of Laplacian to hypergraphs, which can be applied to hypergraph clustering problems. 
However, there is no straightforward approach to generalize the graph Laplacian to a hypergraph Laplacian. One way is to model a hypergraph as a tensor, for which we can define Laplacian~\cite{cooper2012spectra,hu2015laplacian} and construct hypergraph cut algorithms~\cite{bulo2009game,ghoshdastidar2014consistency}. 
However, this requires the hypergraph to obey a strict condition of a $k$-uniform hypergraph, where each edge connects exactly $k$ nodes. 
The second approach is to construct a weighted graph, which can deal with arbitrary hypergraphs. 
Rodriguez's approach defines Laplacian of arbitrary hypergraph as an adjacency matrix of weighted graph~\cite{Rod}. 
Zhou's approach defines a hypergraph from the normalized cut approach, and outperforms Rodriguez's Laplacian on a clustering problem~\cite{ZhouHyper}. 
However, Rodriguez's Laplacian does not consider how many nodes are connected by each edge, and Zhou's Laplacian is not consistent with the graph Laplacian. 
Although all of previous studies consider the analogue to graph Laplacian, none of them considers the analogue to the Laplacian from differential geometry. This allows us to further extend to more general hypergraph $p$-Laplacian, which is not extensively studied unlike in the case of graph $p$-Laplacian~\cite{pgraph,Zhou06}.

In this paper, we generalize the analogy between graph Laplacian and differential geometry to the hypergraph setting, and propose a novel hypergraph $p$-Laplacian, which is consistent with the graph Laplacian. We define gradient of the function over hypergraph, and induce the divergence and Laplacian as formulated in differential geometry. Taking advantage of this formulation, we extend our hypergraph Laplacian to a hypergraph $p$-Laplacian, which allows us to better capture hypergraph characteristics. We also propose a semi-supervised machine learning method based upon this $p$-Laplacian. Our experiment on hypergraph semi-supervised clustering problem shows that our hypergraph $p$-Laplacian outperforms the current hypergraph Laplacians.

The versatility of differential geometry allows us to introduce several rigorous interpretations of our hypergraph Laplacian. 
A normalized cut formulation is shown to yield the proposed hypergraph Laplacian in the same manner as in standard graphs. 
We further propose a normalized cut corresponding to our $p$-Laplacian, which shows better performance than the ones corresponding to current Laplacians in the experiments.
We also explore the physical interpretation of hypergraph Laplacian, by considering the analogue to the continuous $p$-Dirichlet problem, which is widely used in Physics.  {\it All proofs are in Appendix Section.}


\section{Differential Geometry on Hypergraphs}
\label{theory}

\subsection{Preliminary Definition of Hypergraph}
In this section, we review standard definitions and notations from hypergraph theory. We refer to the literature~\cite{Berge84} for a more comprehensive study. A {\it hypergraph} $G$ is a pair $(V,E)$, where 
$E \subset \cup_{k=1}^{|V|} \cup_{ \{v_1,\ldots,v_k\} \subset V} \{ [v_{\sigma(1)},\ldots,v_{\sigma(k)} ] \mid \sigma \in \mathcal{S}_{k}\}$, and $\mathcal{S}_{k}$ denotes the set of permutations $\sigma$ on $\{1,\ldots,k\}$. 
An element of $V$ is called a {\it vertex} or {\it node}, and an element of $E$ is referred to as an {\it edge} or {\it hyperedge} of the hypergraph. 
A hypergraph is {\it connected} if the intersection graph of the edges is connected. 
In what follows, we assume that the hypergraph $G$ is connected. 
A hypergraph is {\it undirected} when the set of edges are symmetric, and we denote a set of undirected edges as $ E_{un} = E / \mathcal{S} $, where $\mathcal{S} = \cup_{k=1}^{|V|} \mathcal{S}_k$. In other words, edges $[v_{1},v_{2},\ldots, v_{k}] \in E$ and $[v_{\sigma(1)},v_{\sigma(2)},\ldots, v_{\sigma(k)}] \in E$ are not distinguished in $E_{un}$ for any $\sigma \in \mathcal{S}_{k}$, where $k$ is the number of nodes in the edge. 
A hypergraph is {\it weighted} when it is associated with a function $w\colon E \rightarrow \mathbb{R}^{+}$. For an undirected hypergraph it holds that $w([v_{1},v_{2},\ldots, v_{k}]) $$=$$ w([v_{\sigma(1)},v_{\sigma(2)},\ldots, v_{\sigma(k)}])$. 
We define the {\it degree} of a node $v \in V$ as $d(v) = \sum_{e \in E: v \in e} w(e)$, while the degree of an edge $e \in E$ is defined as $\delta (e) = |e|$. To simplify the notation we write $\delta_{e}$ instead of $\delta(e)$. 

We define $\mathcal{H}(V)$ as a Hilbert space of real-valued functions endowed with the usual inner product
\begin{align}
\langle f,g \rangle_{\mathcal{H}(V)} \coloneqq \sum_{v \in V} f(v)g(v)
\end{align}
for all $f,g \in \mathcal{H}(V)$. Accordingly, the Hilbert space $\mathcal{H}(E)$ is defined with the inner product
\begin{align}
\langle f,g \rangle_{\mathcal{H}(E)} \coloneqq \sum_{ e \in E} \frac{1}{\delta_e!}f(e)g(e).
\end{align}
Note that $f$ and $g$ are defined for directed edges.

\subsection{Hypergraph Gradient and Divergence Operators} 
\label{graddiv}
We shall now extend standard graph gradient and divergence operators studied in~\cite{Zhou06} to hypergraphs, which can be considered as hypergraph analogues in both of discrete and continuous case. First, we propose to define hypergraph gradient as follows. 

\begin{define}
\label{defgrad}
 The hypergraph gradient is an operator $\nabla \colon \mathcal{H}(V) \rightarrow \mathcal{H}(E)$ defined by
\begin{align}
\label{grad}
&(\nabla \psi)([v_1, \ldots, v_{\delta_e}] = e )\coloneqq 
\frac{\sqrt{ w(e) } }{\sqrt{\delta_e - 1}}\sum_{i=1}^{\delta_e}\left(\frac{\psi(v_i)}{\sqrt{d(v_i)}} - \frac{\psi(v_1)}{\sqrt{d(v_1)}} \right)
\end{align}
\end{define}

The gradient is defined as a sum of a pairwise smoothness term between $e_{[1]}$ node and the others. 
Since the coefficient of graph gradient is defined as a square root of the weight $w(e)$~\cite{Zhou06}, 
we derive the coefficient for hypergraph by considering the average among the pairs between $e_{[1]}$ and the other node, $w(e)$ divided by $\delta_{e} - 1$, in order to normalize the effect of weight. 
For an undirected hypergraph, we define a gradient for an edge $e \in E_{un}$ and vertex $v$, i.e. $(\nabla \psi)(e,v)$.
Using the gradient defined for each edge, we can define the gradient at each node $v$ as $\nabla\psi(v) \coloneqq  \{(\nabla \psi)(e) / \delta_{e}! \mid e_{[1]} = v, e \in E\}$, where $e_{[1]}$ denotes the first element of edge $e$. 
Then, the norm of the gradient $\nabla \psi$ at node $v$ is defined by
\begin{align}
\| \nabla\psi(v) \|:=  \Bigl(  \sum_{ e \in E : e_{[1]} = v } \frac{(\nabla \psi)^2(e)}{\delta_{e}!}   \Bigr) ^{\frac{1}{2}}.
\end{align}
It then follows that the definition of this norm satisfies the conditions of a norm in a metric space. The {\it $p$-Dirichlet sum} of the function $\psi$ is given by
\begin{align}
\label{kdirichlet}
S_{p}(\psi) := \sum_{v \in V} \| \nabla\psi(v) \|^{p}.
\end{align}
Loosely speaking, the norm of the gradient on a node of a hypergraph measures local smoothness of the function around the node, and the Dirichlet sum measures total roughness over the hypergraph. 
Remark that $\| \nabla \psi\|$ is defined in the space $\mathcal{H}(E)$ as $\| \nabla \psi\| $$=$$ \langle \nabla \psi,\nabla \psi \rangle^{1/2}_{\mathcal{H}(E)}$, and satisfies $S_{2}$$ =$$ \| \nabla \psi  \|^2$.

\begin{define}
 The hypergraph divergence is an operator $\mathrm{div}: \mathcal{H}(E) \rightarrow \mathcal{H}(V) $ which satisfies $\forall \psi \in \mathcal{H}(V),\forall\phi \in \mathcal{H}(E)$
\begin{align}
\label{divstokes}
 \langle \nabla \psi, \phi \rangle_{\mathcal{H}(E)} = \langle \psi, -\mathrm{div} \phi\rangle_{\mathcal{H}(V)}.
\end{align}
\end{define}
Notice that Eq.~\eqref{divstokes} can be regarded as a hypergraph analogue of Stokes' Theorem on manifolds. 
The divergence can now be written in a closed form as follows:
\begin{proposition}
\label{propdiv}
\begin{align}
\nonumber
\mathrm{div} \phi (v) =& -\sum_{ e \in E : v \in e } \frac{\sqrt{w(e)} }{\delta_e!\sqrt{\delta_e-1}\sqrt{d(v)}} \phi(e)\\
\label{div}
&+\sum_{e \in E : e_{[1]} = v}\delta_e \frac{\sqrt{w(e)}}{\delta_e!\sqrt{\delta_e-1}\sqrt{d(v)}}\phi(e).
\end{align}
\end{proposition}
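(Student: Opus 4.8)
The plan is to evaluate the defining adjoint identity \eqref{divstokes} directly: expand $\langle\nabla\psi,\phi\rangle_{\mathcal{H}(E)}$ using Definition~\ref{defgrad}, rewrite it as $\sum_{v\in V}\psi(v)\,c_\phi(v)$ for coefficients $c_\phi(v)$ depending only on $\phi$, and then conclude $-\mathrm{div}\,\phi(v)=c_\phi(v)$. Because $V$ is finite and $\langle\cdot,\cdot\rangle_{\mathcal{H}(V)}$ is the non-degenerate standard inner product, \eqref{divstokes} (required to hold for every $\psi$) pins down $\mathrm{div}\,\phi$ uniquely, so it suffices to exhibit the formula \eqref{div} and verify it; existence and uniqueness then come together.

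First I would substitute \eqref{grad} into $\langle\nabla\psi,\phi\rangle_{\mathcal{H}(E)}=\sum_{e\in E}\frac{1}{\delta_e!}(\nabla\psi)(e)\phi(e)$. Writing $e=[v_1,\ldots,v_{\delta_e}]$ and discarding the vanishing $i=1$ term, the gradient rearranges to
\begin{align}
(\nabla\psi)(e)=\frac{\sqrt{w(e)}}{\sqrt{\delta_e-1}}\left(\sum_{i=1}^{\delta_e}\frac{\psi(v_i)}{\sqrt{d(v_i)}}-\delta_e\,\frac{\psi(e_{[1]})}{\sqrt{d(e_{[1]})}}\right),
\end{align}
so the inner product splits into a ``node-sum'' part and a ``head-node'' part. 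Interchanging the order of summation in each part — using that each $v$ occurs exactly once among $v_1,\ldots,v_{\delta_e}$, so the first part becomes a sum over all $e$ incident to $v$, while the second part only involves the edges with $e_{[1]}=v$ — yields
\begin{align}
\langle\nabla\psi,\phi\rangle_{\mathcal{H}(E)}=\sum_{v\in V}\psi(v)\left(\frac{1}{\sqrt{d(v)}}\sum_{e\in E:\, v\in e}\frac{\sqrt{w(e)}\,\phi(e)}{\delta_e!\sqrt{\delta_e-1}}-\frac{1}{\sqrt{d(v)}}\sum_{e\in E:\, e_{[1]}=v}\frac{\delta_e\sqrt{w(e)}\,\phi(e)}{\delta_e!\sqrt{\delta_e-1}}\right).
\end{align}

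To finish, I would compare this with $\langle\psi,-\mathrm{div}\,\phi\rangle_{\mathcal{H}(V)}=\sum_{v\in V}\psi(v)\,(-\mathrm{div}\,\phi(v))$ and test against the indicator functions $\psi=\mathbbm{1}_{\{v\}}$, which form a basis of $\mathcal{H}(V)$; this forces $-\mathrm{div}\,\phi(v)$ to equal the parenthesized coefficient, and flipping signs gives exactly \eqref{div}. I would also remark that the undirected case is consistent, since the symmetry $w([v_{\sigma(1)},\ldots])=w([v_1,\ldots])$ makes the expression well defined on $E_{un}$.

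I expect the only genuine obstacle to be the bookkeeping in the two summation swaps: it is easy to conflate ``$v$ is some node of $e$'' with ``$v$ is the head node $e_{[1]}$ of $e$,'' and the extra factor $\delta_e$ must be attached to the head-node term alone. Once those are handled carefully, the remainder is routine algebra together with the non-degeneracy argument that simultaneously yields existence and uniqueness of $\mathrm{div}$.
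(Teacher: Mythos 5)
Your proposal is correct and follows essentially the same route as the paper's own proof: substitute the gradient formula into $\langle\nabla\psi,\phi\rangle_{\mathcal{H}(E)}$, regroup the double sum by vertex, and read off the coefficient of $\psi(v)$ as $-\mathrm{div}\,\phi(v)$, which after a sign flip is exactly Eq.~\eqref{div}. The explicit appeal to non-degeneracy of the inner product (testing against indicator functions) is a small, welcome addition that the paper leaves implicit, but it does not change the argument.
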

Intuitively, Eq.~\eqref{div} measures the net flows at the vertex $v$; the first term counts the outflows from the originator $v$ and the second term measures the inflow towards $v$. Note that this allows us to use Eq.~\eqref{div} as a definition of the divergence; it satisfies Eq.~\eqref{divstokes}, analogously to Stokes' theorem in the continuous case. Note also that divergence is always 0 if $\phi$ is undirected i.e $\phi(v_1,v_2,\dots,v_k)=\phi(v_{\sigma(1)},v_{\sigma(2)}, \dots,v_{\sigma(k)})$.
\subsection{Laplace Operators} 

In this section, we present the {\it hypergraph $p$-Laplace operator}, which can be considered as a discrete analogue of the Laplacian in the continuous case.
\begin{define}
The hypergraph Laplacian is an operator $\Delta_{p} \colon \mathcal{H}(V) \rightarrow \mathcal{H}(V)$  defined by
\begin{equation}
\label{pplaplacian}
\Delta_{p} \psi \coloneqq -\mathrm{div}(\| \nabla\psi \|^{p-2} \nabla \psi).
\end{equation}
\end{define}
This operator is linear for $p$$=$2. For an undirected hypergraph, we get the hypergraph $p$-Laplacian as follows; 
\begin{proposition}
\label{propplaplacian}
\begin{align}
\label{laplacian_written_down} 
(\Delta_{p}\psi)(v) =\sum_{u \in V\backslash\{v\}} \left( d_{p}(v) \frac{\psi(v)}{\sqrt{d(v)}} -w_{p}(u,v)\frac{\psi(u)}{\sqrt{d(u)}}  \right) 
\end{align}
denoting $w_{p}(v,v) = 0$ and
\begin{align}
\nonumber
&w_p(u,v) =
\sum_{e \in E_{un};u,v \in e} \frac{w(e)}{\delta_e - 1} \times\\
\nonumber &
\left(
-\|\nabla\psi_{e}\|^{p-2} + \|\nabla\psi(u)\|^{p-2} +  \|\nabla\psi(v)\|^{p-2}   
\right), \\
\nonumber
and\\
\nonumber
&d_p(v) = d(v) \|\nabla\psi(v)\|^{p-2} \\
\nonumber
&-
\sum_{e\in E_{un}; v \in e} 
\left(\frac{w(e)}{\delta_e - 1}
\left( 
- \|\nabla\psi_{e}\|^{p-2}  + \|\nabla\psi(v)\|^{p-2}
\right) 
\right),
\end{align}
where $\|\nabla\psi_{e}\|^{p} = \sum_{v' \in e} \| \nabla\psi(v') \|^{p}/\delta_{e}$. 
\end{proposition}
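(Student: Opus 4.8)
The plan is a direct computation starting from $\Delta_p\psi=-\mathrm{div}\bigl(\|\nabla\psi\|^{p-2}\nabla\psi\bigr)$: insert the closed form of the divergence from Proposition~\ref{propdiv}, then the gradient from Definition~\ref{defgrad}, and finally collapse the ordered copies of each hyperedge to reach the undirected formula. One modelling point has to be fixed first, namely the meaning of the edge function $\phi:=\|\nabla\psi\|^{p-2}\nabla\psi\in\mathcal H(E)$: since $\|\nabla\psi\|$ is attached to nodes, for a directed edge $e$ one reads $\phi(e)=\|\nabla\psi(e_{[1]})\|^{p-2}(\nabla\psi)(e)$, evaluating the norm at the originator $e_{[1]}$. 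This is the reading consistent with the rest of the framework, because with it $\langle\nabla\psi,\phi\rangle_{\mathcal H(E)}=\sum_v\|\nabla\psi(v)\|^{p-2}\sum_{e:\,e_{[1]}=v}(\nabla\psi)^2(e)/\delta_e!=\sum_v\|\nabla\psi(v)\|^p=S_p(\psi)$, so the identity $\langle\psi,\Delta_p\psi\rangle_{\mathcal H(V)}=\langle\nabla\psi,\phi\rangle_{\mathcal H(E)}$ forced by \eqref{divstokes} reproduces the $p$-Dirichlet sum \eqref{kdirichlet}; and for $p=2$ the factor is $1$, so $\Delta_2$ is the linear operator already defined.

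Inserting $\phi$ into \eqref{div}, it is convenient to split the first (``outflow'') sum $\sum_{e:\,v\in e}$ into its $e_{[1]}=v$ part and its $e_{[1]}\neq v$ part, since only the former combines with the $\delta_e$-weighted (``inflow'') sum $\sum_{e:\,e_{[1]}=v}$. Substituting \eqref{grad}, every summand turns into a linear combination of the values $\psi(u)/\sqrt{d(u)}$ over $u\in e$, with coefficients built from $w(e)$, $\delta_e$, $\delta_e!$ and one factor $\|\nabla\psi(e_{[1]})\|^{p-2}$; the two occurrences of $\sqrt{w(e)}/\sqrt{\delta_e-1}$, one from the divergence and one from the gradient, multiply into the weight $w(e)/(\delta_e-1)$ of the statement. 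Now sum over the $\delta_e!$ ordered copies of each $e\in E_{un}$. The combinatorics are elementary: each vertex of a hyperedge on $\delta_e$ nodes is the originator in exactly $(\delta_e-1)!$ of the $\delta_e!$ orderings, and $\delta_e\cdot(\delta_e-1)!=\delta_e!$, so in the inflow sum the factor $\delta_e$ cancels the normalisation $1/\delta_e!$ after symmetrisation, whereas in the outflow sum a factor $1/\delta_e$ survives. Applied to the $\delta_e$ originator contributions — one $\|\nabla\psi(v')\|^{p-2}$ for each $v'\in e$ — this leftover $1/\delta_e$ is exactly what builds the edge-averaged power written $\|\nabla\psi_e\|^{p-2}$ in the statement, while the terms in which $u$ or $v$ happens to be the originator produce the $\|\nabla\psi(u)\|^{p-2}$ and $\|\nabla\psi(v)\|^{p-2}$ contributions.

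Collecting, the contribution of a single $e\in E_{un}$ with $v\in e$ involves precisely the three powers $\|\nabla\psi_e\|^{p-2}$, $\|\nabla\psi(u)\|^{p-2}$, $\|\nabla\psi(v)\|^{p-2}$, and their signs track the $-$ in front of the outflow sum (propagated by $\Delta_p=-\mathrm{div}$) against the $+$ of the inflow sum together with the sign flip caused by the diagonal term $-\delta_e\,\psi(e_{[1]})/\sqrt{d(e_{[1]})}$ inside \eqref{grad}; this is what yields the $-1,+1,+1$ pattern of $w_p$. Summing over all $e\ni v$ and using $\sum_{e\ni v}w(e)=d(v)$ to absorb the diagonal piece into $d(v)\|\nabla\psi(v)\|^{p-2}$, the $\psi(v)$-dependence organises into the $d_p(v)\,\psi(v)/\sqrt{d(v)}$ term and, for each fixed $u\neq v$, the $\psi(u)$-dependence into the $-w_p(u,v)\,\psi(u)/\sqrt{d(u)}$ term, which is the assertion.

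The statement has no analytic content — one tacitly assumes $\|\nabla\psi\|$ does not vanish at the nodes involved when $p<2$ so that the powers are defined — and the only genuine difficulty is organisational: keeping the asymmetry of \eqref{div} in the originator $e_{[1]}$ under control while symmetrising over the $\delta_e!$ orderings, so that the three powers land on the correct monomials with the clean weight $w(e)/(\delta_e-1)$ and the right signs. I would close with the two sanity checks: $p=2$ collapses every power to $1$ and recovers the linear hypergraph Laplacian of the previous subsection; and $\delta_e\equiv2$ reduces $\|\nabla\psi_e\|^{p-2}$ to $\tfrac12\bigl(\|\nabla\psi(u)\|^{p-2}+\|\nabla\psi(v)\|^{p-2}\bigr)$, so that $w_p$ and $d_p$ become those of the graph $p$-Laplacian of \cite{Zhou06}.
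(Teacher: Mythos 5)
Your proposal is correct and follows essentially the same route as the paper's own proof: substitute the closed forms of the divergence and gradient into $\Delta_p\psi=-\mathrm{div}(\|\nabla\psi\|^{p-2}\nabla\psi)$ with the norm factor read at the originator $e_{[1]}$ (which the paper does implicitly by writing $\|\nabla\psi(u)\|^{p-2}\nabla\psi(e;e_{[1]}=u)$), then collapse the $\delta_e!$ orderings of each undirected edge via the $(\delta_e-1)!$-per-originator count, with the surviving $1/\delta_e$ producing the averaged power $\|\nabla\psi_e\|^{p-2}$ and the sign bookkeeping yielding $w_p$ and $d_p$ exactly as in the appendix. Your added remarks — the justification of the originator reading via consistency with $S_p(\psi)$, and the $p=2$ and $\delta_e\equiv2$ sanity checks — are compatible extras rather than a different argument.
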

Let $W_{p}$ be a matrix whose elements $w_p(u,v)$, $D_{p}$ be a diagonal matrix whose elements $d(u,u) = \sum_{v \in V} d_p(u,v)$. 
For $p=2$ case, which is a standard setting for hypergraph Laplacian, it becomes 
\begin{align}
\label{laplacian_written_down_2} 
&(\Delta\psi)(v) =\sum_{u \in V\backslash\{v\}} \left( d(v) \frac{\psi(v)}{\sqrt{d(v)}} -w(u,v)\frac{\psi(u)}{\sqrt{d(u)}}  \right), \\ 
\nonumber
&\mathrm{where}\\
\nonumber
&w(u,v) = \sum_{e \in E_{un};u,v \in e} \frac{w(e)}{\delta_e - 1},\mathrm{\ } w(u,u) = 0.
\end{align}
We denote $W_{2}$ by $W$ and a diagonal matrix $D$ whose elements by $d(u,u) = d(u)$. Note that $d_p(u,u) = \sum_{u \in V} w_p(v,u)$. Using these matrices the Laplacian in~\eqref{pplaplacian} can be rewritten as
\begin{align}
\label{laplacian}
 (\Delta_{p}\psi) = D^{-1/2}(D_{p}-W_{p})D^{-1/2}\psi.
\end{align}
We shall denote the matrix associated with the Laplacian by $L_p =  D^{-1/2}(D_p-W_p)D^{-1/2}$, so that the Dirichlet sum can be rewritten by using $L_p$ as follows.
\begin{proposition}
\label{p-dirichletmatrix}
 The Dirichlet sum $S_p(\psi)$ can be rewritten as
\begin{align}
\label{dirichlethypergraph}
 S_p(\psi) = \psi^{\top}L_p\psi.
\end{align}
\end{proposition}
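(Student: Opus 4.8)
The plan is to route the identity through the divergence formula \eqref{divstokes} rather than expanding the closed form of $L_{p}$ term by term. First I would rewrite the Dirichlet sum so that each summand carries an explicit factor $\|\nabla\psi(v)\|^{2}$: writing $\|\nabla\psi(v)\|^{p}=\|\nabla\psi(v)\|^{p-2}\,\|\nabla\psi(v)\|^{2}$ and using the node-gradient norm $\|\nabla\psi(v)\|^{2}=\sum_{e\in E:\,e_{[1]}=v}(\nabla\psi)^{2}(e)/\delta_{e}!$ gives
\begin{align}
\label{dirsumrewrite}
S_{p}(\psi)=\sum_{v\in V}\|\nabla\psi(v)\|^{p-2}\sum_{e\in E:\,e_{[1]}=v}\frac{(\nabla\psi)^{2}(e)}{\delta_{e}!}.
\end{align}

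The second step is to read the right-hand side of \eqref{dirsumrewrite} as a single inner product in $\mathcal{H}(E)$. If $\|\nabla\psi\|^{p-2}$ is understood as the element of $\mathcal{H}(E)$ whose value on an edge $e$ is the gradient norm evaluated at its head node $e_{[1]}$ (for an undirected hypergraph this is the symmetrization that produces the averaged quantity $\|\nabla\psi_{e}\|$ appearing in Proposition~\ref{propplaplacian}), then merging the iterated sum into a single sum over all directed edges turns \eqref{dirsumrewrite} into $\langle\nabla\psi,\ \|\nabla\psi\|^{p-2}\nabla\psi\rangle_{\mathcal{H}(E)}$, i.e. the left-hand side of \eqref{divstokes} evaluated at the test field $\phi=\|\nabla\psi\|^{p-2}\nabla\psi\in\mathcal{H}(E)$.

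The remaining steps are purely formal. Applying \eqref{divstokes} to this $\phi$ moves the gradient off $\psi$, so that $S_{p}(\psi)=\langle\nabla\psi,\phi\rangle_{\mathcal{H}(E)}=\langle\psi,\,-\mathrm{div}\,\phi\rangle_{\mathcal{H}(V)}=\langle\psi,\Delta_{p}\psi\rangle_{\mathcal{H}(V)}$, the last equality being the definition \eqref{pplaplacian} of the $p$-Laplacian. Then the matrix form \eqref{laplacian} gives $\Delta_{p}\psi=L_{p}\psi$, and since $\langle\cdot,\cdot\rangle_{\mathcal{H}(V)}$ is just the Euclidean inner product on $\mathbb{R}^{|V|}$ we read off $S_{p}(\psi)=\psi^{\top}L_{p}\psi$, which is \eqref{dirichlethypergraph}.

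I expect the only real difficulty to be the bookkeeping in the second step: one must check that the per-edge weights $1/\delta_{e}!$ in the $\mathcal{H}(E)$ inner product, the directed-edge convention in Definition~\ref{defgrad}, and the passage to undirected edges are all handled consistently, so that the edge function $\|\nabla\psi\|^{p-2}$ entering $\phi$ is exactly the one for which $-\mathrm{div}\,\phi$ collapses to the closed form of Proposition~\ref{propplaplacian} (with its $\|\nabla\psi_{e}\|$ terms). Once that identification is pinned down, everything else is immediate from the definitions. A more computational route --- substituting the explicit $w_{p}(u,v)$ and $d_{p}(v)$ into $\psi^{\top}L_{p}\psi$ and collecting terms edge by edge --- also proves the claim and avoids the divergence machinery, but it is considerably more tedious and less transparent.
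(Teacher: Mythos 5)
Your proposal is correct and follows essentially the same route as the paper: the paper proves $\langle \psi, \Delta_{p}\psi\rangle_{\mathcal{H}(V)} = S_{p}(\psi)$ (Proposition~\ref{penergy}) by applying the Stokes-type adjointness \eqref{divstokes} to $\phi = \|\nabla\psi\|^{p-2}\nabla\psi$ and regrouping the $\mathcal{H}(E)$ inner product by head node, and then obtains \eqref{dirichlethypergraph} from the matrix form \eqref{laplacian}. You merely run the same chain in the opposite direction (from $S_{p}$ toward $\psi^{\top}L_{p}\psi$), which is an immaterial difference.
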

Note that $L_p$ depends on the function $\psi$, while $L \coloneqq L_2$ is independent. 
When the hypergraph degenerates into a standard graph and $p = 2$, $L$ coincides with the graph Laplacian. 

From the above analysis, the following three statements follow straightforwardly.
\begin{proposition}
\label{penergy}
$\langle \psi, \Delta_{p}\psi \rangle_{\mathcal{H}(V)} = S_{p}(\psi)$
\end{proposition}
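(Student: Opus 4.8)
The plan is to chain together three ingredients already available: the definition of $\Delta_p$ in Eq.~\eqref{pplaplacian}, the gradient--divergence adjointness of Eq.~\eqref{divstokes}, and the way the inner product on $\mathcal{H}(E)$ decomposes over the originating nodes of edges. No delicate estimate is needed; the content is purely the bookkeeping of these three facts.

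First I would set $\phi \coloneqq \|\nabla\psi\|^{p-2}\nabla\psi$. Since $\nabla\psi \in \mathcal{H}(E)$ and multiplying it by the nodewise scalar field $\|\nabla\psi\|^{p-2}$ keeps it in $\mathcal{H}(E)$, this $\phi$ is a legitimate element of $\mathcal{H}(E)$, so Eq.~\eqref{divstokes} applies with it: $\langle \nabla\psi, \phi\rangle_{\mathcal{H}(E)} = \langle \psi, -\mathrm{div}\,\phi\rangle_{\mathcal{H}(V)}$. By the definition~\eqref{pplaplacian} of the hypergraph $p$-Laplacian, the right-hand side is exactly $\langle \psi, \Delta_p\psi\rangle_{\mathcal{H}(V)}$. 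It therefore remains to identify $\langle \nabla\psi, \|\nabla\psi\|^{p-2}\nabla\psi\rangle_{\mathcal{H}(E)}$ with $S_p(\psi)$.

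For that last step I would use that every directed edge $e$ has a unique first element $e_{[1]}$, so $E = \bigsqcup_{v\in V}\{e\in E : e_{[1]} = v\}$. Expanding the $\mathcal{H}(E)$ inner product and regrouping the sum over this partition, and using that the multiplier $\|\nabla\psi\|^{p-2}$ on an edge $e$ with $e_{[1]}=v$ is $\|\nabla\psi(v)\|^{p-2}$ (a constant on that block, hence pulled out of the inner sum), the expression becomes $\sum_{v\in V}\|\nabla\psi(v)\|^{p-2}\sum_{e:\,e_{[1]}=v}(\nabla\psi)^2(e)/\delta_e!$. The inner sum is precisely $\|\nabla\psi(v)\|^2$ by the definition of the gradient norm, so the whole quantity collapses to $\sum_{v\in V}\|\nabla\psi(v)\|^{p-2}\|\nabla\psi(v)\|^2 = \sum_{v\in V}\|\nabla\psi(v)\|^p = S_p(\psi)$, which finishes the proof. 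The same regrouping at $p=2$ recovers the identity $S_2 = \|\nabla\psi\|_{\mathcal{H}(E)}^2$ noted earlier, which is a useful sanity check.

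The one point to be careful about — the only place where anything can go wrong — is making the object $\|\nabla\psi\|^{p-2}\nabla\psi$ precise as a function on directed edges, i.e.\ fixing that the nodewise scalar $\|\nabla\psi(v)\|^{p-2}$ is attached to an edge via its originator $e_{[1]}=v$; this is exactly the convention that makes the regrouping above legitimate. For $p<2$ one should additionally adopt the convention $0^{p-2}\cdot 0 = 0$ (or restrict to $\psi$ with $\|\nabla\psi(v)\|>0$ for all $v$) so that $\phi$ is well defined at nodes where the gradient vanishes; with this convention the argument goes through unchanged.
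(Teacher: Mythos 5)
Your proposal is correct and follows essentially the same route as the paper: unfold $\Delta_p\psi=-\mathrm{div}(\|\nabla\psi\|^{p-2}\nabla\psi)$, apply the adjointness relation~\eqref{divstokes}, and regroup the $\mathcal{H}(E)$ inner product over edges by their originating node to recover $\sum_{v}\|\nabla\psi(v)\|^{p}$. Your explicit partition $E=\bigsqcup_{v}\{e: e_{[1]}=v\}$ and the remark about the convention for $p<2$ at nodes with vanishing gradient are in fact slightly more careful than the paper's own computation, but the argument is the same.
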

\begin{corollary}
\label{semi-d}
 The Laplacian $L_p$ is positive semi-definite.
\end{corollary}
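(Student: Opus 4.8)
The plan is to deduce semi-definiteness directly from the two energy identities already established, so that the corollary is essentially a one-line consequence. First I would combine Proposition~\ref{penergy}, which gives $\langle\psi,\Delta_p\psi\rangle_{\mathcal{H}(V)} = S_p(\psi)$, with Proposition~\ref{p-dirichletmatrix}, which gives $S_p(\psi) = \psi^\top L_p\psi$, to obtain $\psi^\top L_p\psi = S_p(\psi)$ for every $\psi\in\mathcal{H}(V)$. Then I would observe that, by the definition~\eqref{kdirichlet}, $S_p(\psi) = \sum_{v\in V}\|\nabla\psi(v)\|^p$ is a sum of $p$-th powers of the numbers $\|\nabla\psi(v)\|$; since $\|\cdot\|$ is a genuine norm (as noted after~\eqref{kdirichlet}) and $p\ge 1$, each summand is nonnegative, so $S_p(\psi)\ge 0$. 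Combining these, $\psi^\top L_p\psi\ge 0$, which is the assertion.

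For $p=2$ this already yields the statement in the usual matrix sense, because $L=L_2$ does not depend on $\psi$, so ``$\psi^\top L\psi\ge 0$ for all $\psi$'' literally means $L$ is positive semi-definite. If one prefers an intrinsic verification, writing $L = D^{-1/2}(D-W)D^{-1/2}$ exhibits $L$ as congruent, via the symmetric matrix $D^{-1/2}$, to $D-W$, which is the combinatorial Laplacian of the weighted clique-expansion graph with edge weights $w(u,v)$ from~\eqref{laplacian_written_down_2}; its positive semi-definiteness is the classical identity $x^\top(D-W)x = \frac12\sum_{u,v}w(u,v)(x_u-x_v)^2\ge 0$ together with the zero-row-sum property $\sum_u w(u,v)=d(v)$ recorded in the text.

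The only point that needs care is that for $p\neq 2$ the matrix $L_p$ depends on $\psi$, so ``positive semi-definite'' has to be read in the variational sense appropriate to the nonlinear operator $\Delta_p$: the form $\psi\mapsto\psi^\top L_p\psi$, with $L_p$ evaluated at the same $\psi$, is nonnegative. This is exactly what is needed to guarantee that the energy $S_p$ is bounded below by $0$ and hence admits minimizers in the semi-supervised learning and normalized-cut problems treated later. I do not anticipate any genuine obstacle here; beyond fixing this interpretation, the argument uses nothing more than the fact that $t\mapsto t^p$ is nonnegative on $[0,\infty)$ and the two cited propositions, so no separate manipulation of the closed forms for $w_p$ and $d_p$ in Proposition~\ref{propplaplacian} is required.
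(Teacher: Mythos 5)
Your argument is correct and is essentially the paper's own: the appendix derives $\langle\psi,\Delta_p\psi\rangle_{\mathcal{H}(V)}=S_p(\psi)\ge 0$ and states that the corollary follows immediately, which is exactly your chain $\psi^\top L_p\psi=S_p(\psi)\ge 0$. Your extra remarks on the $p=2$ clique-expansion identity and the $\psi$-dependent reading of ``semi-definite'' for $p\neq 2$ are sensible clarifications but not a different proof.
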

\begin{proposition}
\label{pdiff}
$ \frac{\partial}{\partial \psi}S_{p}(\psi) = p \Delta_{p} \psi$
\end{proposition}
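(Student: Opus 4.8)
The plan is to establish the identity in variational (weak) form and then conclude by nondegeneracy of the inner product. Since the Hilbert space $\mathcal{H}(V)$ carries the Euclidean inner product, the object $\frac{\partial}{\partial\psi}S_{p}(\psi)$ is just the vector of partial derivatives of the scalar functional $S_{p}$, so it suffices to show that for every $\eta\in\mathcal{H}(V)$ the Gâteaux derivative $\left.\frac{d}{dt}S_{p}(\psi+t\eta)\right|_{t=0}$ equals $\langle p\,\Delta_{p}\psi,\eta\rangle_{\mathcal{H}(V)}$; since $\eta$ is arbitrary the claim follows. The computation is essentially an integration by parts, and the only nontrivial ingredient is the defining property of the divergence, Eq.~\eqref{divstokes}.

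First I would record that the gradient of Definition~\ref{defgrad} is linear in $\psi$ (its coefficients involve only $w(e)$, $\delta_{e}$ and the degrees $d(v_{i})$, none of which depends on $\psi$), hence $\nabla(\psi+t\eta)=\nabla\psi+t\,\nabla\eta$. Consequently, at each node $v$,
\begin{align}
\label{eq:expand}
\|\nabla(\psi+t\eta)(v)\|^{2}=\|\nabla\psi(v)\|^{2}+2t\sum_{e\in E:\,e_{[1]}=v}\frac{(\nabla\psi)(e)(\nabla\eta)(e)}{\delta_{e}!}+t^{2}\|\nabla\eta(v)\|^{2}.
\end{align}
Writing $S_{p}(\psi+t\eta)=\sum_{v\in V}\bigl(\|\nabla(\psi+t\eta)(v)\|^{2}\bigr)^{p/2}$ and differentiating \eqref{eq:expand} raised to the power $p/2$ by the chain rule, the linear-in-$t$ term produces, at $t=0$,
\begin{align}
\label{eq:gateaux}
\left.\frac{d}{dt}S_{p}(\psi+t\eta)\right|_{t=0}=p\sum_{v\in V}\|\nabla\psi(v)\|^{p-2}\sum_{e\in E:\,e_{[1]}=v}\frac{(\nabla\psi)(e)(\nabla\eta)(e)}{\delta_{e}!}.
\end{align}

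Next I would collapse the iterated sum over nodes and edges: every directed edge $e$ has a unique originator $e_{[1]}$, so \eqref{eq:gateaux} equals $p\sum_{e\in E}\frac{1}{\delta_{e}!}\bigl(\|\nabla\psi(e_{[1]})\|^{p-2}(\nabla\psi)(e)\bigr)(\nabla\eta)(e)$, which is exactly $p\,\langle\,\|\nabla\psi\|^{p-2}\nabla\psi,\ \nabla\eta\,\rangle_{\mathcal{H}(E)}$ once one reads the edge field $\|\nabla\psi\|^{p-2}\nabla\psi\in\mathcal{H}(E)$ as $e\mapsto\|\nabla\psi(e_{[1]})\|^{p-2}(\nabla\psi)(e)$ — the same convention under which $\Delta_{p}$ is defined in Eq.~\eqref{pplaplacian}. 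Applying Eq.~\eqref{divstokes} with $\phi=\|\nabla\psi\|^{p-2}\nabla\psi$ turns this into $p\,\langle\eta,-\mathrm{div}(\|\nabla\psi\|^{p-2}\nabla\psi)\rangle_{\mathcal{H}(V)}=p\,\langle\eta,\Delta_{p}\psi\rangle_{\mathcal{H}(V)}$, which is what was to be shown.

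I expect no genuine obstacle here, since Proposition~\ref{propdiv} and Eq.~\eqref{divstokes} already encapsulate the discrete Stokes' theorem that does the work; the care needed is purely in bookkeeping — confirming the $\psi$-linearity of $\nabla$, being consistent that the scalar factor $\|\nabla\psi\|^{p-2}$ in $\|\nabla\psi\|^{p-2}\nabla\psi$ is evaluated at the originator node, and justifying the node/edge summation swap. One should also note the standing assumption that $\psi$ lies where $\|\nabla\psi(v)\|\neq 0$ for all $v$, so that $\|\nabla\psi(v)\|^{p-2}$ is well defined for $p<2$; elsewhere $S_{p}$ is only subdifferentiable. As a consistency check, the case $p=2$ gives $\frac{\partial}{\partial\psi}S_{2}(\psi)=2\Delta_{2}\psi=2L\psi$, in agreement with $S_{2}(\psi)=\psi^{\top}L\psi$ from Proposition~\ref{p-dirichletmatrix}.
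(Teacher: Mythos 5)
Your proposal is correct, but it follows a different route from the paper. You compute the directional (G\^{a}teaux) derivative of $S_p$ using the linearity of $\nabla$ and the chain rule on $\|\cdot\|^{p/2}$, recast the result as $p\,\langle \|\nabla\psi\|^{p-2}\nabla\psi,\nabla\eta\rangle_{\mathcal{H}(E)}$, and then invoke the defining adjoint relation Eq.~\eqref{divstokes} to move $-\mathrm{div}$ onto the test function; the identity follows since $\eta$ is arbitrary. The paper instead differentiates the fully written-out expression for $S_p(\psi)$ coordinate-wise with respect to $\psi(v)$, splits the resulting terms according to whether $v$ is the originator $e_{[1]}$ or another vertex of the edge, and matches the outcome against the explicit divergence formula of Proposition~\ref{propdiv}, never using Eq.~\eqref{divstokes} directly in this proof. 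Your weak-form argument is shorter and more structural (it mirrors the continuous integration-by-parts argument and makes clear that the only real input is the discrete Stokes relation), at the cost of having to fix the convention that the scalar factor $\|\nabla\psi\|^{p-2}$ in the edge field $\|\nabla\psi\|^{p-2}\nabla\psi$ is evaluated at the originator $e_{[1]}$ --- which you correctly flag, and which is indeed the convention the paper uses in its proof of Proposition~\ref{propplaplacian}. The paper's explicit computation, by contrast, re-derives everything at the formula level, which is heavier bookkeeping but verifies the closed-form expression of $p\Delta_p\psi$ directly. Your added caveat about non-differentiability at points where $\|\nabla\psi(v)\|=0$ for $p<2$ is a legitimate refinement that the paper passes over silently.
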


{\bf Remark 1.} For the case of standard graph in this setting, the discussion in this section reduces to the discrete geometry for standard graphs, as introduced in~\cite{Zhou06}. This implies that our proposed definition is a natural generalization of discrete geometry for a graph.

\section{Hypergraph Regularization}
\subsection{Hypergraph Regularization Algorithm}
In this section, we consider the hypergraph regularization problem and propose a novel solution. Given the hypergraph $H = (V,E)$ and label set $\mathcal{Y} = \{-1,1\}$, and assume that the subset of $S \subset V$ is labeled, the problem is to classify the vertices in $V \backslash S$ using the label of $S$. 
To solve this, we formulate hypergraph regularization as follows. The regularization of a given function $y\in \mathcal{H}(V)$ aims to find a function $\psi^*$, which enforces smoothness on all the nodes of the hypergraph, and at the same time closeness to the values of a given function $y$, as follows:
\begin{align}
& \psi^{*} = 
\argmin_{\psi\in\mathcal{H}(V)}
\left(
S_p(\psi) + \mu \| \psi - y \|^2
\right),
\label{regopt}
\end{align}
where $y(v)$ takes $-1$ or 1 if $v$ is labeled, 0 otherwise. The first energy term represents the smoothness as explained in Eq.~\eqref{kdirichlet}, while the second term is a regularization term. Let $\mathcal{E}_p(\psi,y,\mu)$ be the objective function of Eq.~\eqref{regopt}. 
Since the positive power of positive convex function is also convex, $\mathcal{E}_p$ is a convex function for $\psi$, meaning that Eq.~\eqref{regopt} has a unique solution, satisfying 
\begin{align}
\label{diff}
\left. \frac{\partial \mathcal{E}_p}{\partial \psi} \right|_{v} = \frac{\partial}{\partial \psi}\| \nabla_{v} \psi \|^{p} + 2\mu(\psi(v) - y(v)) = 0,
\end{align}
for all $v \in V$. Using Proposition~\ref{pdiff} we can rewrite the left hand side of Eq.~\eqref{diff} as 
\begin{align}
\label{rewritediff}
p( \Delta_{p} \psi )(v) + 2\mu(\psi(v) - y(v)) = 0,\  \forall v \in V.
\end{align}
The solution to the problem~\eqref{regopt} is therefore also solution to~\eqref{rewritediff}. Substituting the expression of the Laplacian from Eq.~\eqref{laplacian_written_down} into Eq.~\eqref{rewritediff} yields

\begin{align}
\label{inducedopt}
p\sum_{u \in V}  l_p(u,v)\psi(u)
+ 2\mu (\psi(v) - y(v)) = 0,
\end{align}
where $l_p(u,v)$ is a element of $L_p$.
To solve Eq.~\eqref{inducedopt} numerically, we shall use the Gauss-Jacobi iterative algorithm, similarly to the discrete case introduced in~\cite{Bougleux07}. Let $\psi^{(t)}$ be the solution obtained at the iteration step $t$, the update rule of the corresponding linearized Gauss-Jacobi algorithm is then given by
\begin{align}
\label{updaterule}
&\psi^{(t+1)}(v) = \sum_{u \in V \backslash \{v\}} c^{(t)}(u,v) \psi^{(t)}(u) + m^{(t)}(v)y(v),\\
&\mathrm{where}
\nonumber\\
\nonumber
&c^{(t)}(u,v) =
- \frac{ p l_p^{(t)}(u,v) }{ p l_p^{(t)} (v,v) + 2 \mu}
\mathrm{and \ }
 m^{(t)}(v)  = \frac{ 2 \mu }{ p l_p^{(t)} (v,v) + 2 \mu },
\end{align}
and where $l_p^{(t)}(u,v)$ is a $p$-Laplacian defined by $\psi^{(t)}$. We take $\psi^{(0)} = y$ as an initial condition. 
The following theorem guarantees the convergence of the update rule for arbitrary $p$. 
\begin{theorem}
\allowdisplaybreaks[2]
\label{pconv}
The update rule Eq.~\eqref{updaterule} yields a convergent sequence.  
 Moreover, with notations $\alpha $$= 1/(1 + \mu)$ and $\beta$$= \mu /(1 + \mu)$, a closed form solution to Eq.~\eqref{regopt} for $p=2$ is
\begin{align}
 \label{analyticalclosedform}
 \psi = \beta(I - \alpha  D^{-1/2}WD^{-1/2})^{-1}y.
\end{align}
\end{theorem}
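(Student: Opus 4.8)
The plan is to handle the two claims by different means: the $p=2$ case collapses to a single stationary linear recursion, from which both convergence and the explicit formula drop out at once, whereas general $p$ requires a fixed-point / energy-descent argument.

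\textbf{Step 1 ($p=2$).} I would specialize the optimality condition \eqref{rewritediff} to $p=2$. Since $\Delta_{2}\psi=L\psi$ with, by \eqref{laplacian} and $w(v,v)=0$, $L=D^{-1/2}(D-W)D^{-1/2}=I-D^{-1/2}WD^{-1/2}$, the condition becomes $2L\psi+2\mu(\psi-y)=0$, i.e. $\big((1+\mu)I-M\big)\psi=\mu y$ with $M\coloneqq D^{-1/2}WD^{-1/2}$. Next I would check that $M$ is similar to the row-stochastic matrix $D^{-1}W$ — row-stochastic because $\sum_{u}w(u,v)=\sum_{e\ni v}\tfrac{w(e)}{\delta_e-1}\,|\{u\in e:u\ne v\}|=\sum_{e\ni v}w(e)=d(v)$ — so $M$ is symmetric with spectrum in $[-1,1]$ and $\rho(M)=1$. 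Hence $(1+\mu)I-M$ has spectrum in $[\mu,2+\mu]$, is invertible, and dividing by $1+\mu$ gives exactly \eqref{analyticalclosedform}. For the iteration itself, note that for $p=2$ the coefficients in \eqref{updaterule} are constant in $t$: $c(u,v)=\alpha\,M_{vu}$ and $m(v)=\beta$, so \eqref{updaterule} reads $\psi^{(t+1)}=\alpha M\psi^{(t)}+\beta y$; since $\rho(\alpha M)=\alpha<1$ this converges from any $\psi^{(0)}$, and summing the Neumann series gives $\psi^{(t)}\to\beta(I-\alpha M)^{-1}y$, which matches \eqref{analyticalclosedform} and, being derived from \eqref{rewritediff}, is the unique minimizer of \eqref{regopt}.

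\textbf{Step 2 (general $p$).} Here $\mathcal{E}_p(\cdot,y,\mu)$ is continuous, strictly convex (the data term is, and $S_p$ is convex as already noted) and coercive, so it has a unique minimizer $\psi^{*}$, and by \eqref{rewritediff} the fixed points of the update map $T_p$ of \eqref{updaterule} are precisely $\psi^{*}$. Coercivity keeps the orbit $\{\psi^{(t)}\}$ in a bounded set $K$; on $K$ the weights $\|\nabla\psi^{(t)}(v)\|^{p-2}$ are bounded, $L_p^{(t)}$ is positive semidefinite (Corollary~\ref{semi-d}) with the diagonal of $pL_p^{(t)}+2\mu I$ at least $2\mu>0$, and $D_p^{(t)}-W_p^{(t)}$ has zero row sums. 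From these facts I would show the Gauss-Jacobi iteration matrix $C^{(t)}=(c^{(t)}(u,v))_{u,v}$ satisfies $\rho(C^{(t)})\le\kappa<1$ uniformly in $t$ — equivalently, that replacing $\|\nabla\psi(v)\|^{p}$ by its tangent majorizer at $\psi^{(t)}$ yields a quadratic surrogate $Q^{(t)}\ge\mathcal{E}_p$ (with equality at $\psi^{(t)}$) whose Jacobi sweep is exactly \eqref{updaterule} and does not increase $\mathcal{E}_p$. Then $\mathcal{E}_p(\psi^{(t)})$ is non-increasing and bounded below, hence convergent; combined with the uniform contraction this forces $\{\psi^{(t)}\}$ to be Cauchy, and continuity of $T_p$ plus uniqueness of its fixed point identifies the limit as $\psi^{*}$.

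\textbf{Main obstacle.} The delicate point is exactly the uniform bound $\rho(C^{(t)})\le\kappa<1$ (equivalently, the monotone-energy claim) for general $p$: unlike $p=2$, $L_p^{(t)}$ changes at every step, so no single spectral radius suffices, and the classical Jacobi criterion amounts to verifying that $2\,\mathrm{diag}(pL_p^{(t)}+2\mu I)-(pL_p^{(t)}+2\mu I)$ stays positive definite over the bounded orbit $K$ — this is where positive semidefiniteness of $L_p^{(t)}$, the zero-row-sum structure of $D_p^{(t)}-W_p^{(t)}$, and the shift $2\mu I$ must be used together. A minor side issue is the differentiability of $\|\nabla\psi(v)\|^{p}$ where a gradient norm vanishes for $p<2$; this does not arise for $p\ge2$ and is otherwise dealt with by the usual $\varepsilon$-smoothing.
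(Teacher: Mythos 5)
Your $p=2$ step is correct and is essentially the paper's own argument: the stationarity condition becomes $\bigl((1+\mu)I-M\bigr)\psi=\mu y$ with $M=D^{-1/2}WD^{-1/2}$, the row sums $\sum_u w(u,v)=d(v)$ give spectrum of $M$ in $[-1,1]$, and the constant-coefficient iteration $\psi^{(t+1)}=\alpha M\psi^{(t)}+\beta y$ converges by $\rho(\alpha M)=\alpha<1$ and the Neumann series to $\beta(I-\alpha M)^{-1}y$, exactly Eq.~\eqref{analyticalclosedform}.

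For general $p$, however, your argument has a genuine gap, and it sits precisely where you flag it. The entire convergence claim rests on the uniform bound $\rho(C^{(t)})\le\kappa<1$ (equivalently, on the assertion that one Jacobi sweep of the frozen-weight surrogate does not increase $\mathcal{E}_p$), and this is only announced, never established. Worse, the majorization route you sketch cannot work as stated for all $p$ in the range the paper uses: freezing the weights $\|\nabla\psi^{(t)}(v)\|^{p-2}$ produces a quadratic that majorizes $\|\nabla\psi(v)\|^{p}$ only when $x\mapsto x^{p/2}$ is concave, i.e.\ $p\le 2$; for $p>2$ the frozen-weight quadratic is a \emph{minorizer}, so the surrogate inequality $Q^{(t)}\ge\mathcal{E}_p$ and the descent claim both fail. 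There is also a circularity: you obtain boundedness of the orbit from coercivity, but coercivity only confines sublevel sets, so you need the (unproven) monotone-descent property to keep the iterates in a sublevel set in the first place. Finally, even granting diagonal dominance of $pL_p^{(t)}+2\mu I$, a single Jacobi sweep of a quadratic need not decrease it unless the condition $2\,\mathrm{diag}(A)-A\succ 0$ is verified, which you list but do not check for the matrices $W_p$, whose off-diagonal entries $w_p(u,v)$ can change sign when $p\neq 2$.

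The paper avoids this machinery altogether: it bounds the orbit directly from the structure of the update coefficients (nonnegativity and the row-sum-one identity $m(v)+\sum_u c(v,u)=1$ give a min--max principle $\min\{\psi^{(0)}(v),\min_u\psi^{(t)}(u)\}\le\psi^{(t+1)}(v)\le\max\{\psi^{(0)}(v),\max_u\psi^{(t)}(u)\}$), works in the ball $\mathcal{M}(V)=\{\psi:\|\psi\|_\infty\le\|\psi^{(0)}\|_\infty\}$, and invokes continuity of the update map together with Schauder's fixed point theorem, then appeals to convexity of $S_p$ to identify the limit, following the graph case of Bougleux et al. If you want to salvage your energy-descent route, you would need either to restrict to $1\le p\le 2$ (where the MM majorization is valid) or to supply a genuinely different uniform-contraction estimate for $p>2$; alternatively, adopt the paper's coefficient-structure argument for boundedness and drop the coercivity step.
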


The update rule~\eqref{updaterule} can be intuitively thought of as an analogue to heat diffusion process, similar to the standard graph case~\cite{Zhou06}. At each step, every vertex is affected by its neighbors, which is normalized by the relationship among any number of nodes. At the same time, the neighbors also retains some fraction from their effects. The relative amount by which
these updates occur is specified by the coefficients defined in Eq.~\eqref{updaterule}.

\subsection{Physical Interpretation of Hypergraph Regularization}

In standard graph cases, regularization with the graph Laplacian can be explained as an analogue to the continuous Dirichlet problem~\cite{Grady03}, which is widely used in physics, particularly in fluid dynamics.
 To avoid confusion, the continuous calculus operators are referred to when $(c)$ is superscripted. The Dirichlet integral is defined as  
\begin{equation}
 S^{(c)}_p (\psi) = \int_{\Omega} \|\nabla^{(c)} \psi\|^{p} d \Omega,
\label{contiDirichlet}
\end{equation}
and is minimized when the Laplace equation
\begin{align}
\label{contiharmonic}
 \Delta^{(c)} (\psi) \coloneqq \mathrm{div}^{(c)}( \|\nabla^{(c)} \psi\|^{p-2}\nabla^{(c)}\psi) = 0
\end{align}
is satisfied~\cite{courant1962methods}. The parameter $p$ is a coefficient for characteristics of viscosity of fluid. The function $\phi$ satisfying the Laplace equation is called a harmonic function. Solving Eq.~\eqref{contiDirichlet} with a boundary condition makes it possible to find a plausible interpolation between the boundary points. 

From a physical standpoint, finding the shape of an elastic membrane is well approximated by the Dirichlet problem. One may think about a rubber sheet fixed along its boundary, and hung down by gravity. This setting can be written as Dirichlet problem, and the solution would give the most stable form of a rubber sheet, whose characteristics of elasticity is represented by $p$. To solve this numerically, we have to discretize this continuous function. With the pairwise effect between the nodes, solving the Dirichlet problem over a standard graph can be thought of as finding a plausible surface over the graph.

 In the graph setup, we can say that solving the Dirichlet problem over a standard graph corresponds to finding a plausible surface over the graph which favors boundary condition $y$. For the hypergraph setting, solving the hypergraph Dirichlet problem gives a plausible surface with the boundary $y$, and $p$ is a parameter for hypergraph; this is achieved by considering not only the pairwise effects, but also the interactions among any number of nodes. In fact, if we discretize the continuous domain of definition into lattice and consider the effect of the next neighbor, the second-order differential operator is given by $D-W$ when $p=2$. Interestingly, if we set up the lattice as a hypergraph, which means that we take into account any number of neighbors at the same time, the second-order differential operator is also $D-W$.

\section{Hypergraph Cut}

\subsection{Revisiting the Hypergraph Two-class Cut} 
From the discussion so far, it is to be expected that there exists a relationship between hypergraph spectral theory and the considered manifold setup. Similarly to the case of standard graph and Zhou's hypergraph Laplacian, we now introduce the hypergraph cut problem that has a connection to our Laplacian, whereby a hypergraph can be partitioned into two disjoint sets, $A$ and $B$, $A \cup B = V$, and $A\cap B = \emptyset$. The normalized hypergraph cut can now be formulated as a minimization problem given by
\begin{align}
\label{ncutNP}
 &Ncut(A,B) =  \partial V(A,B)\left(\frac{1}{\mathrm{vol}(A)} + \frac{1}{\mathrm{vol}(B)}\right),\\
\nonumber
&\mathrm{where}\\
\label{cuthypergraph}
& \partial V(A,B) \coloneqq \sum_{u\in A,v\in B} \sum_{e \in E_{un} : u,v \in e}\frac{w(e)}{\delta_{e}-1} = \sum_{u \in A, v \in B} w(u,v),
\end{align}
and $\mathrm{vol}(A) = \sum_{u\in A}d(u).$ Note that this setting is consistent with the normalized cut problem on a standard graph.
Let $f \in \mathcal{H}(V)$, be a $|V|$ dimensional indicator vector function; $f(v)= a$ if node $v$ is in $A$, $f(v) = -b$ otherwise, where $a = 1/\mathrm{vol}(A)$ and $ b = 1/\mathrm{vol}(B)$. 
With these notations the problem~\eqref{ncutNP} can be rewritten as Rayleigh quotient:
\begin{align}
\nonumber
 \mathrm{min \ } Ncut(A,B) = \frac{f^{\top} D^{-\frac{1}{2}} (D-W) D^{-\frac{1}{2}} f}{f^{\top}f} \nonumber \\
 \label{rayliegh_2nd}
 \mathrm{\ s.t.}  \hspace{0.5em} \sqrt{d(v)}f(v) \in \{a,-b\}, f^{\top}D\mathbf{1} = 0, 
\end{align}
 Minimizing $Ncut$ is NP-hard, but it can be relaxed if we embed this problem in the real domain, and the solution of the relaxed problem is given by the second smallest eigenvalue of $L$~\cite{matcomp,normcut}.  

This setting is somewhat different from the work by~\citeauthor{ZhouHyper}~\shortcite{ZhouHyper} : if we replace the denominator of Eq.~\eqref{cuthypergraph} from $(\delta_e -1)$ to $\delta_e$, then it is exactly same as~\cite{ZhouHyper}. This difference from Zhou's approach allows for the proposed setting to be consistent with standard graphs and standard random walk setting whereas Zhou's setting can be seen as a case of the lazy random walk, as discussed in Appendix.

\subsection{Hypergraph Multiclass Cut}

We shall now extend two-class cut to multiclass cuts and establish the connection between this setting and our proposed Laplacian, similarly to~\cite{ZhouHyper}. In the standard graph case, multiclass clustering problem corresponds to decomposing $V$ into $k$ disjoint sets; $V = \cup_{i=1}^{k} V_i$ and $V_i \cap V_j = \emptyset $ for $ i \neq j$. We shall denote this multiclass clustering by $\Gamma_{V}^{k} = \{V_1,\ldots, V_k\}$, and formulate this problem as that of minimizing 
\begin{align}
\label{kncuts}
 Ncut(\Gamma_{V}^{k}) = \sum_{i=1}^{k} \frac{\partial V(V_i,V \backslash V_i)}{\mathrm{vol}(V_i)},
\end{align}
where $\partial V(V_i,V \backslash V_i) / \mathrm{vol}(V_i)$ measures the total weights of the links from $V_i$ to other clusters in $G$. We denote the multiclass clustering $\Gamma_{V}^{k}$ by a $|V| \times k$ matrix $X$, where $x(u,i) = 1$ if node $u$ belongs to the $i$th cluster and 0 otherwise. This allows us to rewrite the problem as 
\begin{align}
\label{minkncut}
 \nonumber
 \mathrm{min. \ } Ncut(\Gamma_{V}^k) &= \sum_{i=1}^{k}\frac{X_{i}^\top(D - W) X_i}{X_{i}^{\top}DX_i} \\
 &\mathrm{\ s.t.}  \hspace{0.5em} X \in \{1,0\}^{N \times k},X \mathbf{1}_{k} = \mathbf{1}_{|V|}.
\end{align}
To this end, we consider relaxing the constraints $X$ by minimizing 
$Ncut(\Gamma_{V}^{k}) $
with constraints $\tilde{Z}^{\top}\tilde{Z}=I_k$ where $\tilde{Z}=D^{1/2}Z$ and $Z=X(X^{\top}DX)^{(-1/2)}$. The optimal solution to this problem is given by the eigenvectors associated with the smallest $k$ eigenvalues of the Laplacian $L$. Similarly to Zhou's Laplacian, the following proposition holds.
\begin{proposition}
\label{multiprop}
 Denote the eigenvalues of Laplacian $L$ by $\lambda_{1} \leq \cdots \leq \lambda_{|V|}$, and define $c_{k}(H) = \min Ncut \mathrm{\ s.\ t.\ }X \in \{1,0\}^{N \times k},X \mathbf{1}_{k} = \mathbf{1}_{|V|} $. Then $\sum_{i=1}^k \lambda_{i} \le c_k(H)$.
\end{proposition}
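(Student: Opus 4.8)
The plan is to follow the standard Rayleigh-quotient relaxation argument that connects normalized cuts to Laplacian eigenvalues, exactly as in the classical graph case and in Zhou's hypergraph treatment. First I would observe that the objective $Ncut(\Gamma_V^k)$ in Eq.~\eqref{minkncut} can be rewritten in terms of the substitution $Z = X(X^\top D X)^{-1/2}$ and $\tilde Z = D^{1/2} Z$. A direct computation shows that $X_i^\top (D-W) X_i / (X_i^\top D X_i)$ equals $\tilde z_i^\top L \tilde z_i$ where $\tilde z_i$ is the $i$-th column of $\tilde Z$ and $L = D^{-1/2}(D-W)D^{-1/2}$; summing over $i$ gives $Ncut(\Gamma_V^k) = \mathrm{tr}(\tilde Z^\top L \tilde Z)$. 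Moreover the discrete indicator structure of $X$ forces $\tilde Z^\top \tilde Z = Z^\top D Z = (X^\top D X)^{-1/2} X^\top D X (X^\top D X)^{-1/2} = I_k$, so every feasible $X$ yields a matrix $\tilde Z$ with orthonormal columns.

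The second step is the relaxation: we drop the combinatorial constraint that $\tilde Z$ arises from an actual clustering matrix $X$, and minimize $\mathrm{tr}(\tilde Z^\top L \tilde Z)$ over all $\tilde Z \in \mathbb{R}^{|V|\times k}$ with $\tilde Z^\top \tilde Z = I_k$. Since $L$ is symmetric positive semi-definite (Corollary~\ref{semi-d} with $p=2$), the Ky Fan theorem (or the Courant–Fischer / Poincaré separation theorem) gives that the minimum of $\mathrm{tr}(\tilde Z^\top L \tilde Z)$ over orthonormal $k$-frames equals the sum of the $k$ smallest eigenvalues $\sum_{i=1}^k \lambda_i$, attained by taking the columns of $\tilde Z$ to be the corresponding eigenvectors. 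Because the relaxed feasible set contains the image of every admissible $X$, the relaxed minimum is a lower bound on the constrained minimum, i.e. $\sum_{i=1}^k \lambda_i \le c_k(H)$, which is the claim.

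The routine parts are the algebraic identities in the first step; the one place that needs a little care is verifying that the rewriting $X_i^\top(D-W)X_i / (X_i^\top D X_i) = \tilde z_i^\top L \tilde z_i$ is consistent with the normalization in $Z = X(X^\top D X)^{-1/2}$ — in particular that the $(X^\top D X)^{-1/2}$ factor is exactly what turns the per-cluster Rayleigh quotients into a single trace over an orthonormal frame, and that $W$ here is the pairwise matrix $W_2$ with entries $w(u,v) = \sum_{e\ni u,v} w(e)/(\delta_e-1)$ so that $\partial V(V_i, V\setminus V_i) = X_i^\top(D-W)X_i$ holds on the nose (this uses $w(u,u)=0$ and $d(v) = \sum_u w(u,v)$, i.e.\ that $D-W$ has zero row sums). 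The main obstacle, such as it is, is simply making sure the Ky Fan lower bound is invoked in the correct direction and that the discrete-to-relaxed inclusion of feasible sets is stated cleanly; no genuinely hard estimate is involved, since positive semi-definiteness of $L$ does all the analytic work.
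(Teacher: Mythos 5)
Your proposal is correct and follows essentially the same route as the paper's (very terse) proof: rewrite $Ncut(\Gamma_V^k)$ as $\mathrm{tr}(\tilde Z^\top L \tilde Z)$ with $\tilde Z^\top\tilde Z = I_k$, relax the combinatorial constraint, and invoke the Ky Fan / trace-minimization characterization of $\sum_{i=1}^k \lambda_i$. You simply spell out the algebraic identities (including the zero-row-sum check $d(v)=\sum_u w(u,v)$) that the paper leaves implicit.
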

As discussed in~\cite{ZhouHyper}, this result shows that the result of the real-value relaxed optimization problem gives us a lower bound for the original combinatorial optimization problem. However, it is not clear how to use the $k$ eigenvectors to obtain $k$ clusters. For a standard graph, applying the $k$-means method to the $k$ eigenvectors heuristically performs well, and this approach can be applied to the hypergraph problem as well.

\subsection{Hypergraph $p$-Normalized Cut}
From the above discussion, one might expect that there exists corresponding hypergraph cut induced from hypergraph $p$-Laplacian, similarly to the graph $p$-Laplacian case~\cite{pgraph}. Since $p$-Laplace operator is nonlinear, we need to define eigenvalues and eigenvectors.
\begin{define}
Hypergraph $p$-eigenvalue $\lambda_p \in \mathbb{R}$ and $p$-eigenvector $\psi \in \mathcal{H}(V)$ of $\Delta_p$ are defined by
\begin{align}
\label{eigenvec}
(\Delta_{p}\psi)(v) = \lambda_p \xi_{p}(\psi(v)),\mathrm{where}\xi_p(x) = |x|^{p-1} \mathrm{sgn}(x).
\end{align}
\end{define}
To obtain $p$-eigenvector and $p$-eigenvalue, we consider Rayleigh quotient and the following statements follow:
\begin{proposition}
\label{critical_eigen}
Consider the Rayleigh quotient for $p$-Laplacian,
\begin{align}
\label{rayliegh}
R_{p}(\psi) = \frac{S_{p}(\psi)}{\|\psi\|_{p}^{p}}, \mathrm{where\ } \|\psi\|_{p} = (\sum_{v} \psi^p(v))^{1/p}.
\end{align}
The function $R_p$ has a critical point at $\psi$ if and only if $\psi$ is $p$-eigenvector of $\Delta_{p}$. 
The corresponding $p$-eigenvalue $\lambda_{p}$ is given as $\lambda_p$$=$$R_p (\psi)$. 
Moreover, we have $R_p (\alpha \psi)$$ =$$ R_p(\psi),$ $\forall \psi \in H(V)$ and $\alpha \in \mathbb{R}, \alpha \neq 0$. 
\end{proposition}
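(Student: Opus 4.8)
The plan is to treat $R_p$ as a function on $\mathcal{H}(V)\setminus\{0\}$, compute its Euclidean gradient with respect to the vector $\psi=(\psi(v))_{v\in V}$, and read off when that gradient vanishes. First I would dispose of the scale-invariance claim, which is elementary: the hypergraph gradient of Definition~\ref{defgrad} is linear in $\psi$, so $\nabla(\alpha\psi)=\alpha\nabla\psi$, hence $\|\nabla(\alpha\psi)(v)\|=|\alpha|\,\|\nabla\psi(v)\|$ and therefore $S_p(\alpha\psi)=|\alpha|^p S_p(\psi)$ by~\eqref{kdirichlet}; likewise $\|\alpha\psi\|_p^p=|\alpha|^p\|\psi\|_p^p$, so the factors $|\alpha|^p$ cancel in the quotient and $R_p(\alpha\psi)=R_p(\psi)$ for all $\alpha\neq 0$.

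For the characterization of critical points I would differentiate $R_p=S_p/\|\psi\|_p^p$ coordinatewise. By Proposition~\ref{pdiff}, $\partial S_p/\partial\psi(v)=p(\Delta_p\psi)(v)$, and a direct computation gives $\partial\|\psi\|_p^p/\partial\psi(v)=p|\psi(v)|^{p-1}\mathrm{sgn}(\psi(v))=p\,\xi_p(\psi(v))$. The quotient rule then yields
\begin{align}
\label{gradrp}
\frac{\partial R_p}{\partial\psi(v)}=\frac{p}{\|\psi\|_p^p}\Bigl((\Delta_p\psi)(v)-R_p(\psi)\,\xi_p(\psi(v))\Bigr).
\end{align}
Since $\psi\neq 0$ makes the prefactor nonzero, \eqref{gradrp} vanishes for every $v$ exactly when $(\Delta_p\psi)(v)=R_p(\psi)\,\xi_p(\psi(v))$ for every $v$, which is the $p$-eigenvector equation~\eqref{eigenvec} with eigenvalue $\lambda_p=R_p(\psi)$. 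This simultaneously gives the ``only if'' direction and the stated value of the eigenvalue.

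For the ``if'' direction I would start from a $p$-eigenpair $(\lambda_p,\psi)$, i.e.\ $(\Delta_p\psi)(v)=\lambda_p\xi_p(\psi(v))$, take the $\mathcal{H}(V)$-inner product with $\psi$, and use $\sum_v\psi(v)\xi_p(\psi(v))=\sum_v|\psi(v)|^p=\|\psi\|_p^p$ together with Proposition~\ref{penergy} ($\langle\psi,\Delta_p\psi\rangle_{\mathcal{H}(V)}=S_p(\psi)$) to conclude $\lambda_p=S_p(\psi)/\|\psi\|_p^p=R_p(\psi)$. Substituting $\lambda_p=R_p(\psi)$ back into the eigenvector equation and comparing with~\eqref{gradrp} shows $\partial R_p/\partial\psi\equiv 0$, so $\psi$ is a critical point. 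The only delicate point is differentiability of $S_p$ (and of $x\mapsto|x|^p$) when $1<p<2$: this is precisely the regularity already asserted in Proposition~\ref{pdiff}, and $|x|^p$ is $C^1$ with derivative $p\,\xi_p(x)$ for $p>1$, so no extra work is needed; the main obstacle is thus reduced to carrying the quotient-rule bookkeeping cleanly and to noting that $\psi=0$ is excluded from the domain of $R_p$.
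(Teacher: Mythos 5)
Your proposal is correct and follows essentially the same route as the paper: differentiate the Rayleigh quotient (via Proposition~\ref{pdiff} and the derivative of $\|\psi\|_p^p$) to obtain the critical-point condition $\Delta_p\psi - R_p(\psi)\,\xi_p(\psi)=0$, identify it with the $p$-eigenvector equation with $\lambda_p=R_p(\psi)$, and get scale invariance from the homogeneity of $S_p$ and $\|\cdot\|_p^p$. Your extra inner-product step via Proposition~\ref{penergy} for the converse direction is a welcome bit of care that the paper leaves implicit, but it is not a different method.
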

\begin{corollary}
\label{1steigen}
The smallest $p$-eigenvalue $\lambda_{p}^{(1)}$ equals to 0, and corresponding $p$-eigenvector is $D^{1/2}\mathbf{1}.$
\end{corollary}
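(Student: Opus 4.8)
The plan is to produce the eigenpair $(\,0,\;D^{1/2}\mathbf{1}\,)$ directly and to rule out negative $p$-eigenvalues by nonnegativity of the Rayleigh quotient. First I would observe that $R_p(\psi)=S_p(\psi)/\|\psi\|_p^p\ge 0$ wherever it is defined: the numerator $S_p(\psi)=\sum_{v\in V}\|\nabla\psi(v)\|^p$ is a sum of $p$-th powers of norms, hence $\ge 0$, and the denominator is positive on the domain of $R_p$. By Proposition~\ref{critical_eigen} every $p$-eigenvalue of $\Delta_p$ equals $R_p$ evaluated at some critical point of $R_p$, so every $p$-eigenvalue is $\ge 0$; in particular $\lambda_p^{(1)}\ge 0$.

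Next I would check that $\psi_0 := D^{1/2}\mathbf{1}$, i.e.\ $\psi_0(v)=\sqrt{d(v)}$, attains the value $0$. Substituting into Definition~\ref{defgrad}, each summand equals $\psi_0(v_i)/\sqrt{d(v_i)}-\psi_0(v_1)/\sqrt{d(v_1)}=1-1=0$, so $(\nabla\psi_0)(e)=0$ for every directed edge $e$; hence $\|\nabla\psi_0(v)\|=0$ for all $v$ and $S_p(\psi_0)=0$. Since $G$ is connected it has at least one edge, so $\|\psi_0\|_p^p=\sum_v d(v)^{p/2}>0$ and $\psi_0$ lies in the domain of $R_p$ with $R_p(\psi_0)=0$. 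Being a nonnegative function attaining $0$ at $\psi_0$, $R_p$ has a global minimum, hence a critical point, at $\psi_0$, and Proposition~\ref{critical_eigen} then makes $\psi_0$ a $p$-eigenvector with $p$-eigenvalue $R_p(\psi_0)=0$. Equivalently and more directly, $\Delta_p\psi_0=-\mathrm{div}(\|\nabla\psi_0\|^{p-2}\nabla\psi_0)=-\mathrm{div}(0)=0=0\cdot\xi_p(\psi_0(v))$, which is exactly Eq.~\eqref{eigenvec} with eigenvalue $0$. Combined with $\lambda_p^{(1)}\ge 0$ this gives $\lambda_p^{(1)}=0$. If one also wants that $D^{1/2}\mathbf{1}$ spans the corresponding eigenspace, note that $R_p(\psi)=0$ forces $\|\nabla\psi(v)\|=0$ for all $v$; writing $g(v)=\psi(v)/\sqrt{d(v)}$ and using $(\nabla\psi)(e,u)=0$ for every undirected edge $e$ and every first vertex $u\in e$ shows $g$ is constant on each edge, whence connectedness makes $g$ globally constant, i.e.\ $\psi\in\mathrm{span}\{D^{1/2}\mathbf{1}\}$.

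The one point needing care is that $\|\nabla\psi\|^{p-2}$ is singular at the zero flow when $p<2$; one adopts the standard convention $\|\phi\|^{p-2}\phi:=0$ when $\phi=0$ (legitimate because $|t|^{p-2}t\to 0$ as $t\to 0$ for $p>1$), which also renders the coefficients $\|\nabla\psi_e\|^{p-2}$ and $\|\nabla\psi(v)\|^{p-2}$ appearing in Proposition~\ref{propplaplacian} harmless here. Apart from that the argument is routine; the only other thing to verify is that $\psi_0$ genuinely lies in the domain of the Rayleigh quotient, which is precisely where connectedness (equivalently $\sum_v d(v)>0$) enters.
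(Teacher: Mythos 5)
Your proof is correct and takes essentially the same route as the paper: nonnegativity of every $p$-eigenvalue (the paper cites positive semidefiniteness of $\Delta_p$, which is the same observation you make through the Rayleigh quotient and Proposition~\ref{critical_eigen}), combined with the direct check that $\nabla(D^{1/2}\mathbf{1})=0$, hence $\Delta_p(D^{1/2}\mathbf{1})=0$ and the eigenvalue $0$ is attained. Your additional remarks on the $p<2$ convention for $\|\nabla\psi\|^{p-2}\nabla\psi$ and on connectedness forcing the zero-eigenvalue eigenspace to be $\mathrm{span}\{D^{1/2}\mathbf{1}\}$ go slightly beyond what the paper states but do not alter the argument.
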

Eq.~\eqref{rayliegh} is analogue to the continuous nonlinear Rayleigh quotient
\begin{align}
R_{p}^{(c)}(\psi) = \frac{ \int_{\Omega} \|\nabla^{(c)} \psi\|^{p} d \Omega  }{ \int_{\Omega} \| \psi\|^{p}  d \Omega },
\end{align}
which relates to nonlinear eigenproblem.

In order to define a hypergraph cut corresponding to hypergraph $p$-Laplacian, let us consider $f \in \mathcal{H}(V)$, a $|V|$ dimensional indicator vector function in Eq.~\eqref{rayliegh_2nd}. Then substituting $f$ into Eq.~\eqref{rayliegh} gives
\begin{align}
\nonumber
 \mathrm{min \ } p \mhyphen Ncut(A,B) \coloneqq \frac{f^{\top} D^{-\frac{1}{2}} (D_p-W_p) D^{-\frac{1}{2}} f}{\|f \|_{p}^{p}} \nonumber \\
 \label{prayliegh}
 \mathrm{\ s.t.}  \hspace{0.5em} \sqrt{d(v)}f(v) \in \{a,-b\}, f^{\top}D\mathbf{1} = 0, 
\end{align}
which can be seen as the cut corresponding to our hypergraph $p$-Laplacian. 
The problem~\eqref{prayliegh} is NP-hard, and therefore we need to consider a relaxed problem, similarly to the case of $p$$=$$2$. 
The constraints in Eq.~\eqref{prayliegh} require the second eigenvector to be orthogonal to the first eigenvector. However, the orthogonal constraint is not suitable for $p$-eigenvalue problem, since the $p$-Laplacian is nonlinear and therefore eigenvectors are not necessary to be orthogonal to each other.

For $p=2$ case, since we see
\begin{align}
\label{var}
\| \psi \|_2^2 = \| \psi -  \frac{ \langle \psi,D^{\frac{1}{2}}\mathbf{1} \rangle }{|V|}  D^{\frac{1}{2}}\mathbf{1}\| = \min_{c \in \mathbb{R}} \|\psi - cD^{\frac{1}{2}}\mathbf{1} \|
\end{align}
by $\langle \psi, D^{\frac{1}{2}}\mathbf{1} \rangle$$ =$$0$ for the second eigenvector, the Rayleigh quotient to get the second eigenvector $v^{(2)}$ can be written as 
\begin{align}
 v^{(2)} = \argmin_{\psi \in H(V)}\frac{S_{p}(\psi)}{\min_c\|\psi - c D^{\frac{1}{2}}\mathbf{1}\|^2}.
\end{align}
Motivated by this, we here define the Rayleigh quotient for the second smallest $p$-eigenvalue as
\begin{align}
\label{rayliegh_2}
R^{(2)}_{p}(\psi) = \frac{S_{p}(\psi)}{\min\|\psi - c D^{\frac{1}{2}}\mathbf{1}\|_{p}^{p}},
\end{align}
This quotient is supported by the following theorem.
\begin{theorem}
\label{psecondlaplacian}
 The global minimum of Eq.~\eqref{rayliegh_2} is equal to the second smallest $p$-eigenvalue $\lambda_p^{(2)}$ of $\Delta_p$. 
The corresponding $p$-eigenvector $\psi^{(2)}_p$ can be obtained by $\psi^{(2)}_p = \psi^{*} - c^{*}D^{\frac{1}{2}} \mathbf{1}$, for any global minimizer $\psi^{*}$ of $R^{(2)}$, where $c^{*} = \argmin_{c \in \mathbb{R}} | \sum \psi^{*}(v) - \sqrt{d(v)}c |^p$.
Moreover, we have $R^{(2)}_p (t\psi + c) = R_p^{(2)} (\psi)$ where $t \in \mathbb{R}, t \neq 0,$ and $ \forall c \in \mathbb{R}$.
\end{theorem}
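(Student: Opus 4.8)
The plan is to leverage two facts about the gradient of Definition~\ref{defgrad}: it is linear, and it annihilates $D^{1/2}\mathbf{1}$ (substituting $\psi(v)=\sqrt{d(v)}$ makes every summand in Eq.~\eqref{grad} vanish, so $\nabla(D^{1/2}\mathbf{1})$ is zero on every edge, hence at every node). Consequently $S_p(t\psi + cD^{1/2}\mathbf{1}) = |t|^{p} S_p(\psi)$, and the reindexing $c''=(c'-c)/t$ gives $\min_{c'}\|(t\psi+cD^{1/2}\mathbf{1})-c'D^{1/2}\mathbf{1}\|_p^{p} = |t|^{p}\min_{c''}\|\psi-c''D^{1/2}\mathbf{1}\|_p^{p}$; dividing proves the invariance $R_p^{(2)}(t\psi+cD^{1/2}\mathbf{1}) = R_p^{(2)}(\psi)$ (the last assertion), and shows $R_p^{(2)}$ is defined precisely on $\mathcal{H}(V)\setminus\mathbb{R}D^{1/2}\mathbf{1}$, where the denominator is positive.

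Next I would analyze the inner minimization. For $p>1$ the map $c\mapsto h(\psi,c):=\sum_{v}|\psi(v)-c\sqrt{d(v)}|^{p}$ is strictly convex, so it has a unique minimizer $c^{*}(\psi)$, characterized by $\sum_{v}\sqrt{d(v)}\,\xi_p(\psi(v)-c^{*}\sqrt{d(v)})=0$; in particular $c^{*}(\psi)=0$ iff $g(\psi):=\sum_{v}\sqrt{d(v)}\,\xi_p(\psi(v))=0$, in which case the denominator of $R_p^{(2)}$ equals $\|\psi\|_p^{p}$ and $R_p^{(2)}(\psi)=S_p(\psi)/\|\psi\|_p^{p}$. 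I would also record that for any $p$-eigenpair $(\lambda,\psi)$, the Stokes identity Eq.~\eqref{divstokes} applied with $D^{1/2}\mathbf{1}$ and $\nabla(D^{1/2}\mathbf{1})=0$ gives $\lambda\, g(\psi) = \langle D^{1/2}\mathbf{1},\Delta_p\psi\rangle_{\mathcal{H}(V)} = \langle\nabla(D^{1/2}\mathbf{1}),\|\nabla\psi\|^{p-2}\nabla\psi\rangle_{\mathcal{H}(E)} = 0$, so $g(\psi)=0$ whenever $\lambda\neq0$. With this, the bound $\min R_p^{(2)}\le\lambda_p^{(2)}$ is immediate: a $p$-eigenvector $\psi_p^{(2)}$ of $\lambda_p^{(2)}$ has $\lambda_p^{(2)}>0$ (the only $0$-eigenvectors are the multiples of $D^{1/2}\mathbf{1}$, since $S_p(\psi)=0$ forces $\psi/\sqrt{d}$ constant along every edge and hence, by connectedness, on all of $V$), so $g(\psi_p^{(2)})=0$ and $R_p^{(2)}(\psi_p^{(2)})=S_p(\psi_p^{(2)})/\|\psi_p^{(2)}\|_p^{p}=\langle\psi_p^{(2)},\Delta_p\psi_p^{(2)}\rangle/\|\psi_p^{(2)}\|_p^{p}=\lambda_p^{(2)}$ by Proposition~\ref{penergy} and $\langle\psi,\xi_p(\psi)\rangle=\|\psi\|_p^{p}$.

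For the reverse inequality and the eigenvector recovery I would show that every global minimizer is a $p$-eigenvector. Since $R_p^{(2)}$ is continuous, scale/shift invariant, and equal to $S_p/\|\cdot\|_p^{p}$ on $\{g=0\}$, minimizing over the compact set $\{\psi:\|\psi\|_p=1,\ g(\psi)=0\}$ (nonempty by the intermediate value theorem for $g$ on the $\ell_p$-sphere, using $g(\pm D^{1/2}\mathbf{1})\neq 0$ of opposite signs) shows the minimum is attained; replacing any minimizer $\psi^{*}$ by $\psi^{*}-c^{*}(\psi^{*})D^{1/2}\mathbf{1}$ (still a minimizer, by invariance) we may assume $c^{*}(\psi^{*})=0$, i.e.\ $g(\psi^{*})=0$. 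The crucial step is that, the inner minimum over $c$ being attained at the unique point $c^{*}=0$, the envelope theorem gives that the $\psi$-derivative of $\psi\mapsto\min_{c}\|\psi-cD^{1/2}\mathbf{1}\|_p^{p}$ at $\psi^{*}$ equals $p\,\xi_p(\psi^{*})$, while $\tfrac{\partial}{\partial\psi}S_p=p\Delta_p\psi$ by Proposition~\ref{pdiff}; hence stationarity $\tfrac{\partial}{\partial\psi}R_p^{(2)}(\psi^{*})=0$ reduces to $\Delta_p\psi^{*}=R_p^{(2)}(\psi^{*})\,\xi_p(\psi^{*})$, so $\psi^{*}$ is a $p$-eigenvector with eigenvalue $\lambda^{*}:=\min R_p^{(2)}$. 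Finally $\lambda^{*}>0$ (else $\psi^{*}\in\mathbb{R}D^{1/2}\mathbf{1}$, contradicting $g(\psi^{*})=0$ with $\psi^{*}\neq 0$), and since every positive $p$-eigenvalue is $\ge\min R_p^{(2)}$ by the previous paragraph, $\lambda^{*}$ is the smallest positive $p$-eigenvalue, i.e.\ $\lambda_p^{(2)}$; applying the same shift to an arbitrary global minimizer yields the claimed $\psi_p^{(2)}=\psi^{*}-c^{*}D^{1/2}\mathbf{1}$.

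The main obstacle is the non-smoothness introduced by the $\min_{c}$ in the denominator of $R_p^{(2)}$: one must justify differentiability of $R_p^{(2)}$ at the minimizer and, above all, use strict convexity/uniqueness of the inner problem to force the optimal shift to $c^{*}=0$, so that the derivative of the denominator collapses to $p\,\xi_p(\psi^{*})$ and the first-order condition becomes \emph{exactly} the $p$-eigenvector equation. A direct Lagrange-multiplier treatment of the reformulated problem $\min S_p/\|\cdot\|_p^{p}$ subject to $g=0$ stalls, since testing the multiplier equation against $\psi^{*}$ only gives $0=0$; the envelope-theorem route is what circumvents this. A secondary use of the connectedness hypothesis is the one-dimensionality of the $0$-eigenspace, needed both to exclude $\lambda^{*}=0$ and to know $\lambda_p^{(2)}>0$.
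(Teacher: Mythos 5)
Your proposal is correct and follows essentially the same route as the paper's proof, which adapts B\"uhler--Hein's Theorem~3.2 using the modified $p$-mean/$p$-variance with the $D^{1/2}\mathbf{1}$ shift: your envelope-theorem computation of $\frac{\partial}{\partial\psi}\mathrm{var}_p = p\,\xi_p(\psi - \sqrt{d}\,\mathrm{mean}_p(\psi))$ is exactly the paper's Proposition~\ref{r_diff}, and your reduction of stationarity of $R_p^{(2)}$ to the $p$-eigenvector equation is its Proposition~\ref{devrayleigh} combined with Eq.~\eqref{raylieghder}. The additional details you spell out (the Stokes argument showing $\sum_v\sqrt{d(v)}\xi_p(\psi(v))=0$ for eigenvectors with nonzero eigenvalue, and connectedness giving the one-dimensional zero eigenspace) are precisely the parts the paper defers to the cited proof of Theorem~3.2 in~\cite{pgraph}, so they fill in rather than change the argument.
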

Therefore, for the relaxed problem of Eq.\eqref{prayliegh}, we consider the Rayleigh quotient Eq.\eqref{rayliegh_2}. 
We note that, as $R^{(2)}_p$ is not convex, optimization algorithms might have danger not to achieve the global minimum.
However, since the function $R_p^{(2)}$ is continuous for $p$, we can assume that the global minimizer of $R_{p_1}^{(2)}$ and of $R_{p_2}^{(2)}$ are close, if $p_1$ and $p_2$ are close. Hence, we firstly obtain the second eigenvector for $p=2$, where there exist more stable algorithms to obtain eigenvectors, and use it as the initial condition for optimization algorithms for $p \neq 2$.

\subsection{Comparison to Existing Hypergraph Laplacians and Related Regularizer} 

We now compare our Laplacian with other two standard ones. 
~\citeauthor{ZhouHyper}~\shortcite{ZhouHyper} have proposed the Laplacian $L_{Z} $$=$$ I $$-$$ D_{v}^{-\frac{1}{2}}HW_{e}D_{e}^{-1}H^{\top}D_{v}^{-\frac{1}{2}}$ based on a normalized cut and lazy random walk view, where the degree matrices $D_{v}$ and $D_{e}$ stand respectively for diagonal matrices, containing degree of nodes and edges, 
$W_e$ is a diagonal matrix containing the weights of edges, and indices matrix $H$ is a $|V|$$\times$$ |E_{un}|$ matrix whose element $h(v,e)$$=$$1$ if node $v$ is connected to the edge $e$, and 0 otherwise. 
In this setting the hypergraph is represented by the matrix $HW_{e}D_{e}^{-1}H^{\top}$, where weights $W_e$ is normalized by degree of edges $D_e$.  
This Laplacian gives the same Laplacian if we consider the standard graph, except for the coefficient 1/2. 
This difference comes from the consistency of a lazy random walk view as explained in Appendix. 
\citeauthor{Agarwal06}~\shortcite{Agarwal06} shows that Zhou's Laplacian is equivalent to hypergraph star expansion in~\cite{Zien99} and~\cite{Li1996}. 

Another Laplacian has been proposed under the unweighted setting by~\citeauthor{Rod}~\shortcite{Rod} and is referred to as Simple Graph Method in~\cite{ZhouHyper}. 
The hypergraph is represented by a matrix $H W_{e}H^{\top} $$-$$ D_{v}$ and Laplacian is defined as $L_{R}$$ =$$ I $$-$$ D_{R}^{-1/2}H W_{e}H^{\top}D_{R}^{-1/2}$, where $D_{R}$ is a diagonal matrix whose elements are $d_{R}(u,u)$$ =$$ \sum_{v \in V} w_{R} (u,v)$ and $w_{R}(u,v) $$=$$ \sum_{e \in E_{un}:u,v \in e } w(e)$. 
This view is consistent with the standard graph, but it does not consider the difference of edge degree $\delta_e$. 
Rodriguez Laplacian is theoretically equivalent to hypergraph clique expansion in~\cite{Zien99},~\cite{Bolla93}, and~\cite{Gibson2000} as shown in~\cite{Agarwal06}.

Our Laplacian can be regarded as a family of Rodriguez's Laplacian, but we normalize the weight by the edge degree $\delta_e -1$ when constructing Laplacian, whose interpretation is in the definition of gradient. 
If we consider the clique constructed by $w(e)$, and also to normalize $w(e)$ by $\delta_e $$-$$1$, we obtain our 2-Laplacian.
Moreover, from the viewpoint of differential geometry, we obtain Rodriguez's Laplacian by changing the denominator in definition of gradient (Def.~\ref{defgrad}) from $\sqrt{\delta_{e} -1}$ to $1$. 

Note that Zhou's, Rodriguez's, and our Laplacian can be seen to reduce a hypergraph to an ordinary graph, whose adjacency matrix $HW_{e}D_{e}^{-1}H^{\top}$, $H W_{e}H^{\top} - D_{v}$, and $W$ respectively. 
Moreover, Rodriguez's and our Laplacian can be constructed from the graph gradient in~\cite{Zhou06} using the graph reduced from hypergraph. 
However, because our hypergraph gradient is different from graph gradient, we cannot construct our hypergraph $p$-Laplacian from graph $p$-Laplacian in~\cite{Zhou06} or another definition of graph $p$-Laplacian in~\cite{pgraph}.
For example, 
consider an undirected hypergraph $G$ where $V$$=$$\{v_1, v_2, v_3 \}$, $E$$ =$$ \{e = \{v_1, v_2, v_3 \}\}$ and $w(e)$$ =$$ 1$, $p$$=$$1$ and the function $\psi(v_1)$$=1, \psi(v_2) $$= $$0,$ and $\psi(v_3) $$= $$0$. 
In this setting we get $\Delta_1(v_1)$$ =$$ 4/\sqrt{6}$, and if we consider a graph reduced from hypergraph Zhou's for $v_1$ is $1/2$$ +$$ 1/2\sqrt{2}$ and unnormalized and normalized B\"uhler's for $v_1$ are both 1/2, while our and Zhou's 2-Laplacians give the same values. Details are in Appendix.


%

\citeauthor{TotalVariation}~\shortcite{TotalVariation} proposed a semi-supervised clustering using a $p$-regularizer $S_{p}^{(h)}=$ $\sum_{e} w(e)$ $(\max_{v\in V}$$ \psi(v) - \min_{u \in V} \psi (u))^{p}$, induced from a total variation on hypergraph, which is Lov\'asz expansion of a hypergraph cut in Eq.~\eqref{cuthypergraph}. 
Moreover, they use total variation $S_1^{(h)}$ for hypergraph cut, which favors balance while ours and the others favor to be attracted by larger hypergraph weights. 
This regularizer is reduced to the same one composed from graph $p$-Laplacian in~\cite{pgraph,nodal} when we consider a standard graph.  

\section{Experiments}

\begin{table*}[!t]
\begin{center}
\caption{Dataset summary. All datasets were taken from UCI Machine Learning Repository.}
\label{tab:sum}
\begin{tabular}{c|ccccccc}
\hline
 & mushroom & cancer & chess & congress & zoo & 20 newsgroups & nursery\\
\hline
\# of classes &  2 & 2 & 2 & 2 & 7 & 4 & 5\\
$|V|$ &  8124 &  699 & 3196 & 435 & 101 & 16242 & 12960\\
$|E|$ & 112 & 90  & 73 & 48 & 42 & 100 & 27\\
$\sum_{e \in E} |e|$ & 170604 & 6291 & 115056 & 6960 & 1717 & 65451 & 103680\\
\hline
\end{tabular}
\end{center}
\end{table*}

We compare the proposed hypergraph Laplacian with other hypergraph Laplacians, Zhou's and Rodriguez's, and Hein's regularizer, on categorical data, where for each instance one or more attributes are given. 
Each attribute has small number of attribute values, corresponding to a specific category. 
We summarize the benchmark datasets we used in Table~\ref{tab:sum}. 
As in~\cite{ZhouHyper}, we constructed a hypergraph for each dataset, where each category is represented as one hyperedge ,and set the weight of all the edges as 1. 
We used semi-supervised learning and clustering to classify the objects of a dataset. 
For fair comparison, we evaluate the performance with error rate, which is used in the previous studies on hypergraph clustering~\cite{ZhouHyper,TotalVariation}.

{\bf Semi-supervised Learning.} 
We compared our semi-supervised learning method, shown in Eq.~\eqref{updaterule} with the existing ones using the Laplacians reported by~\citeauthor{ZhouHyper}~\shortcite{ZhouHyper} and Rodriguez~\shortcite{Rod}. 
There are variety of ways to extend two-class clustering to multiclass clustering~\cite{bishop2007}, but in order to keep the comparison simple, we conducted the experiment only on two-class datasets. 
The parameter $\mu$ was chosen for all methods from $10^{k}$, where $k$$\in$$\{0,1,2,3,4\}$ by 5-fold cross validation. 
We randomly picked up a certain number of labels as known labels, and predicted the remaining ones. 
We repeated this procedure 10 times for different number of known labels. 
For our $p$-Laplacian, we varied $p$ from 1 to 3 with the interval of 0.1, and we show the result of $p$$=$$2$ and the result of $p$ giving the smallest average error for each number of known labeled points. 
The parameter $p$ for Hein's regularizer is fixed at 2, since this is recommended by~\citeauthor{TotalVariation}~\shortcite{TotalVariation}.
The results are shown in Fig.~\ref{exp:semi}~(a)-(d). 
When $p$$=$$2$ our Laplacian almost consistently outperformed Rodriguez's Laplacian, and showed almost the same behavior as Zhou's.
This means that normalizing hypergraph weights by the edge degree when constructing Laplacian enhances the performance. 
When we tuned $p$, our Laplacian consistently outperformed other Laplacians, and Hein's regularizer except the mushroom. 
The dataset mushroom might fit to the Hein's balanced cut assumption more than the other Laplacian's normalized cut assumption. 
Table~\ref{tab:ps} shows the values of $p$ which give the first and the second smallest average error. 
We can observe that $p$ giving the optimal error to each number of known label points would give close value to other points. 
This result implies that $p$ is a parameter for each hypergraph, rather than a parameter for the number of known label points. 
This can be seen as the analogue to fluid dynamics, where $p$ is a coefficient for characteristics of viscosity of each fluid.
\begin{figure*}[!t]
\begin{center}
\subfigure[Mushroom]{%
\includegraphics[width=.21\hsize,clip]{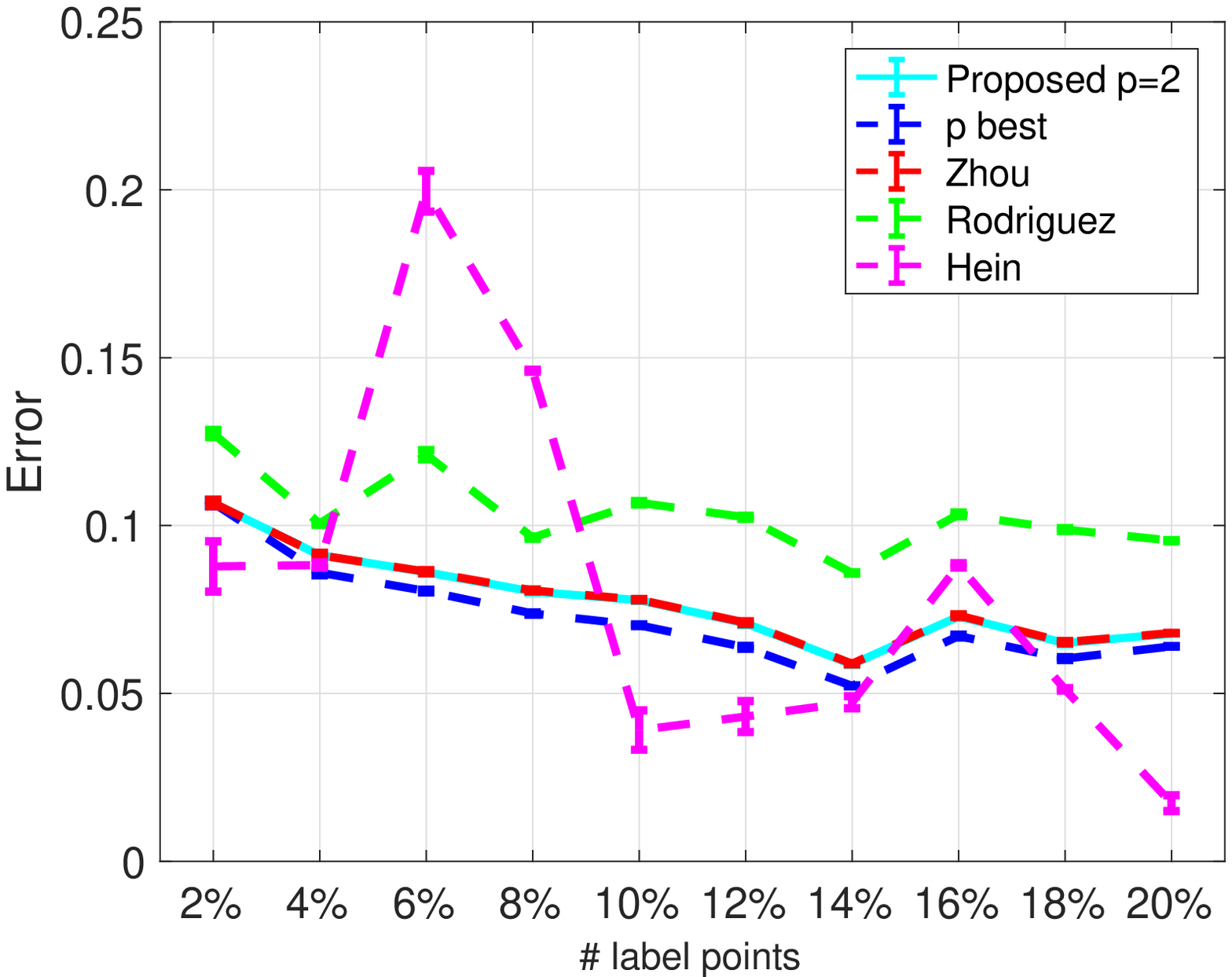}\label{fig:mushroom}}
\subfigure[Breast Cancer]{%
\includegraphics[width=.21\hsize,clip]{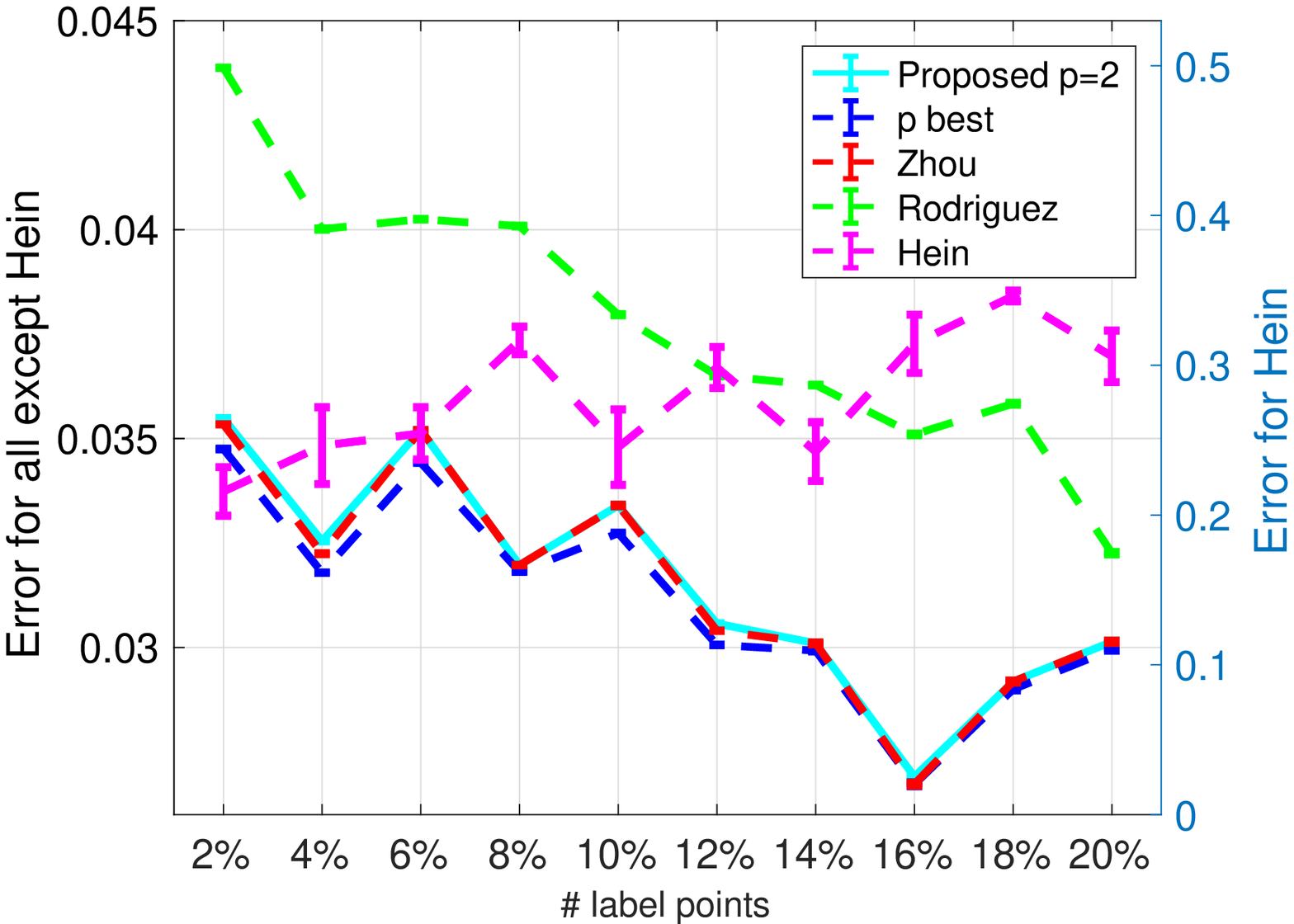}\label{fig:canser}}
~\subfigure[Chess]{%
\includegraphics[width=.21\hsize,clip]{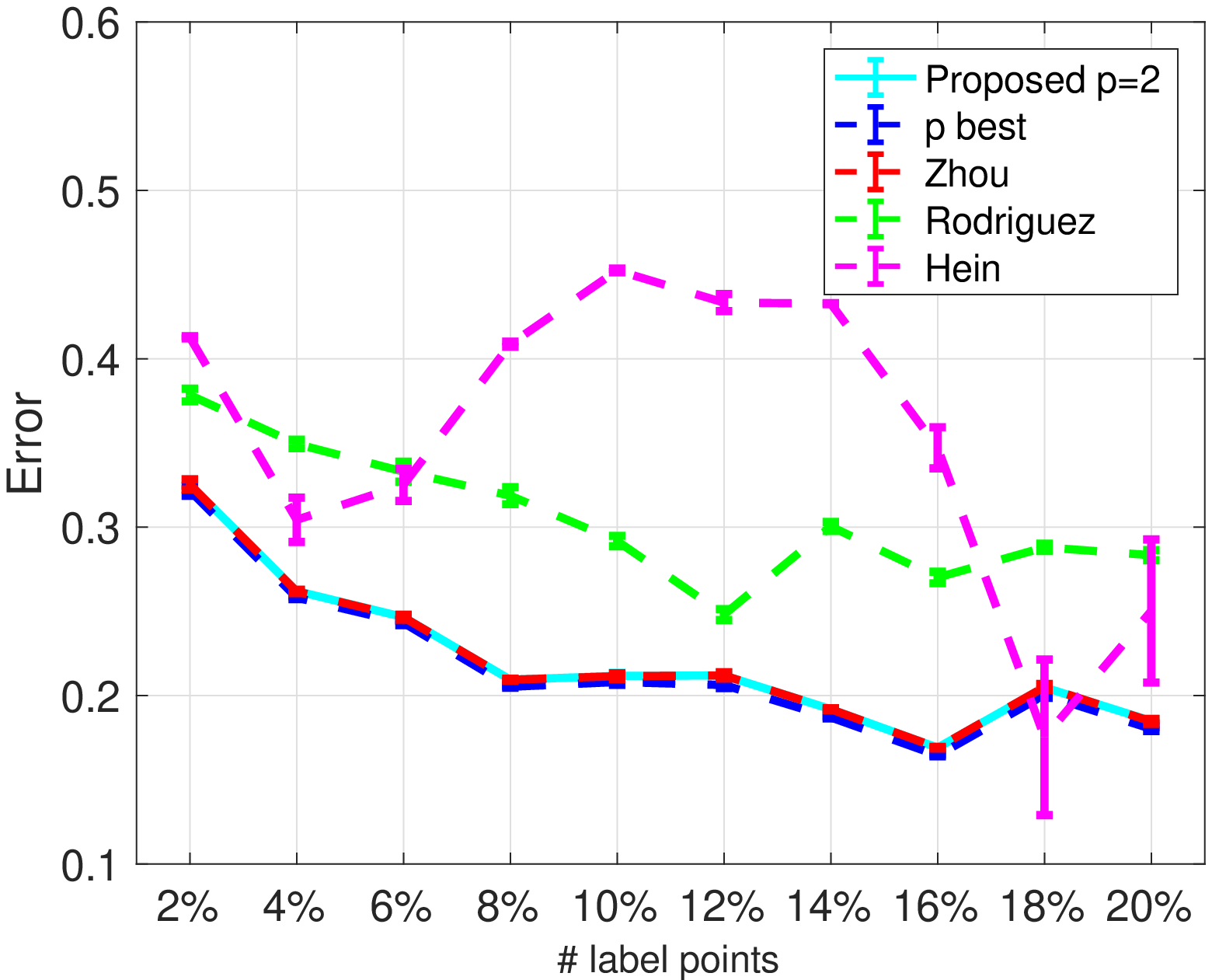}\label{fig:chess}}
 ~\subfigure[Congress]{%
\includegraphics[width=.21\hsize,clip]{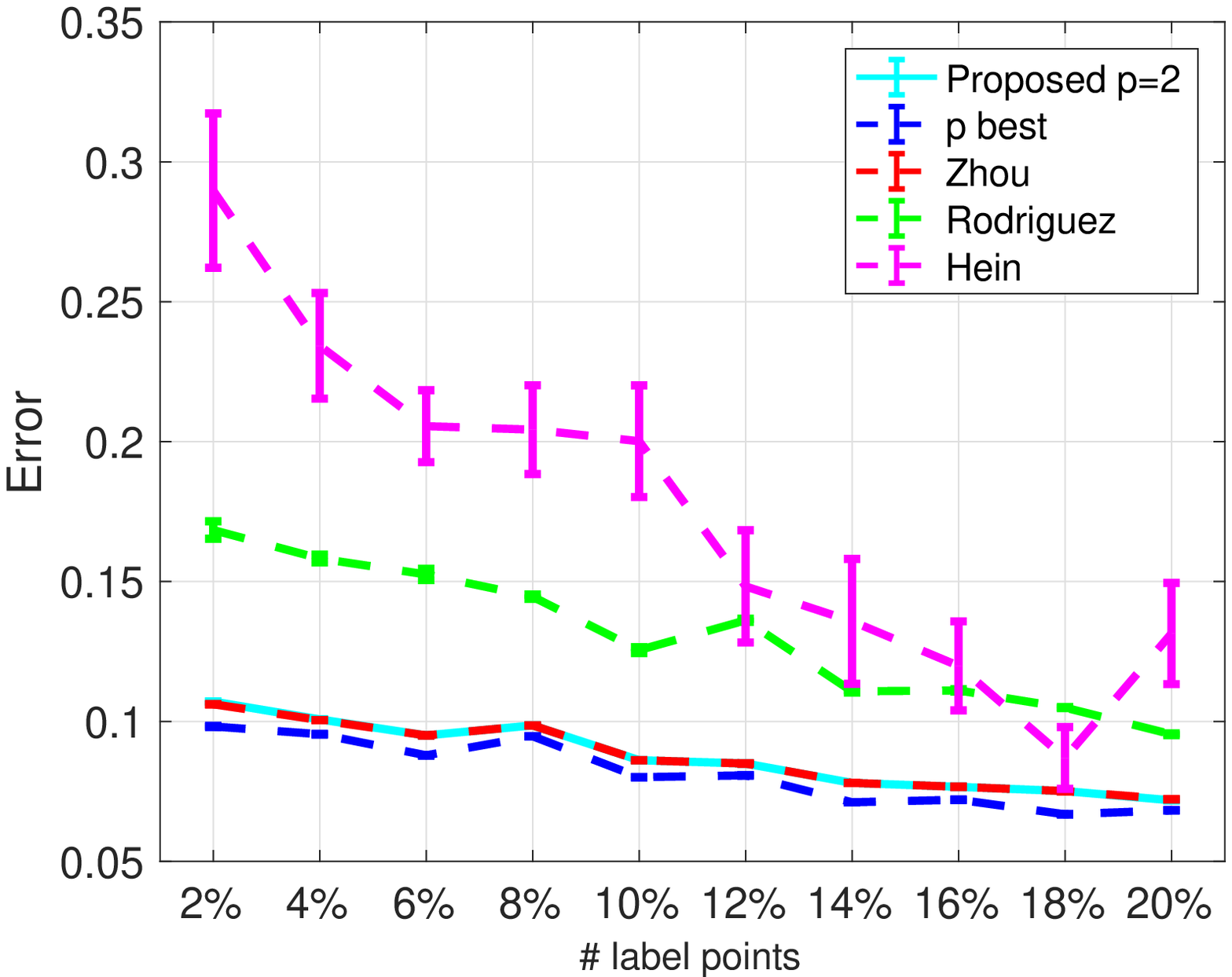}\label{fig:congress}}
\caption{Results of semi-supervised learning from proposed method and the state-of-the-art}
\label{exp:semi}
\end{center}
\end{figure*}
\begin{table*}[!t]
\begin{center}
\caption{The value of $p$ which gives an optimal error.}
\label{tab:ps}
\begin{tabular}{c|c|cccccccccc}
\hline
Dataset & error & \multicolumn{10}{c}{The portion of \# of known label points} \\
\hline
& & 2\% & 4\% & 6\% & 8\% & 10\% & 12\% & 14\% & 16\% & 18\% & 20\% \\
\hline
\multirow{2}{*}{Mushroom} & smallest & 1.6 & 1.6 & 2.3 & 1.5 & 1.3 & 1.3 & 1.7 & 1.7 & 1.0 & 1.4 \\
& 2nd smallest & 2.0 & 2.1 & 2.4 & 1.6 & 1.4 & 1.4 & 1.6 & 1.4 & 1.1 & 1.7 \\
\hline
\multirow{2}{*}{Breast Cancer}& smallest & 2.7 & 2.4 & 2.7 & 2.7 & 2.7 & 2.7 & 2.5 & 2.5 & 2.6 & 2.6 \\
&2nd smallest&2.6 & 2.1 & 2.4 & 2.6 & 2.6 & 2.6 & 2.6 & 2.6 & 2.7 & 2.7 \\
\hline
\multirow{2}{*}{Chess}&smallest&1.2 & 1.0 & 1.0 & 1.0 & 1.0 & 1.1 & 1.0 & 1.9 & 1.0 & 1.0 \\
&2nd smallest&1.1 & 1.1 & 1.1 & 1.5 & 1.1 & 1.2 & 1.2 & 2.0 & 2.4 & 1.1 \\
\hline
\multirow{2}{*}{Congress}&smallest&1.1 & 1.6 & 1.3 & 1.5 & 1.1 & 1.1 & 1.1 & 2.9 & 1.1 & 1.1 \\
&2nd smallest&1.2 & 1.7 & 1.4 & 1.6 & 1.2 & 1.2 & 1.2 & 3.0 & 1.2 & 1.2 \\
\hline
\end{tabular}
\end{center}
\end{table*}

{\bf Clustering.} This experiment aimed to evaluate the proposed Laplacian on a clustering task. We performed two-class and multiclass clustering tasks by solving the normalized cut eigenvalue problem of the Laplacian $L$ for $p=2$. For our $p$-Laplacian, we obtain the second eigenvector of Eq.\eqref{rayliegh} by varying $p$ from 1 to 3 with the interval 0.1, and showed the optimal result. 
In multiclass task experiments, we used the $k$-means method for the obtained $k$~eigenvectors from $L$, and the true number of clusters as the number of clusters. 
To keep the comparison simple, we conducted experiments for $p$-Laplacian only on twoclass problem for the same reason as semi-supervised problem. 
For comparison, we present the results obtained from the Zhou's and Rodriguez's Laplacian and Hein's regularizer for normalized cut, and compared the error rates of the clustering results, as summarized in Table~\ref{tab:cls}. 
Among the Laplacians, we can observe that our $p$-Laplacian consistently outperformed Rodriguez's and Zhou's Laplacian, while our 2-Laplacian showed slightly better or similar results than Rodriguez's and Zhou's ones.
For mushroom, Hein's is significantly better than others. 
This might be for the same reason in the semi-supervised learning experiment.
\begin{table*}[!t]
\begin{center}
\caption{The experimental result on clustering: error rate of clustering. For the result of proposed $p$, we attached the value of $p$ giving the optimal value in the parentheses next to the error value.}
\label{tab:cls}
\begin{tabular}{c|cccc||ccc}
\hline
 &\multicolumn{4}{c||}{Two-Class} &\multicolumn{3}{c}{Multiclass}\\
\cline{2-8}
 & mushroom & cancer & chess & congress & zoo & 20 newsgroups &nursery \\
\hline
Proposed $p$   & 0.2329 ($1$)& {\bf 0.0243} ($1.5$) &{\bf 0.2847}($2.6$)  &{\bf 0.1195} ($2.5$) & - & - & -\\
Proposed $p=2$   & 0.3156 & 0.0286 & 0.4775 & 0.1241 & 0.2287 &{\bf 0.3307} & {\bf 0.2400}\\
Zhou's       & 0.3156 & 0.0300 & 0.4925 & 0.1241 & 0.1975 & {\bf 0.3307} & 0.2426\\
Rodriguez's & 0.4791 & 0.3419 & 0.4931 & 0.3885 & 0.5376 & 0.4318 & 0.2607\\
Hein's & {\bf 0.1349} & 0.3362 & 0.4778 & 0.3034 & {\bf 0.1881} & 0.5113 & 0.5131\\
\hline
\end{tabular}
\end{center}
\end{table*}
\section{Conclusion}
We have proposed a hypergraph $p$-Laplacian from the perspective of differential geometry, and have used it to develop a semi-supervised learning method in a clustering setting, and formalize them as the analogue to the Dirichlet problem. 
We have further explored a theoretical connection with the normalized cut, and propose a normalized cut corresponding to our $p$-Laplacian.
Our proposed $p$-Laplacian has consistently outperformed the current hypergraph Laplacians on the semi-supervised clustering and the clustering tasks. 
There are several future directions.
A fruitful future direction would be to explore extentions, such as algorithms which require less memory~\cite{TotalVariation}, and nodal domain theorem~\cite{nodal}. 
It is also worth to find more applications where hypergraph is used such as~\cite{huang2009video,liu2010robust,cellhypergraph} and where hypergraph Laplacian is the most effective approach compared to the other machine learning approaches. 
In addition, it would be valuable if we choose the best parameter $p$, especially in the clustering case where we have to assume that no labelled data is available.
Moreover, it would be interesting to explore a theoretical connection between hypergraph Laplacian and continuous Laplacian, like in the case of graph where the graph Laplacian is shown to converge to continuous Laplacian~\cite{belkin2003laplacian}. 

{\bf Acknowledgments.} 
We wish to thank Daiki Nishiguchi for useful comments. This research is supported by JST ERATO Kawarabayashi Large Graph Project, Grant Number JPMJER1201, Japan, and by JST CREST, Grant Number JPMJCR14D2, Japan.
\bibliographystyle{aaai}
{\small
\bibliography{reference}}

\appendix

\section{Proof of Proposition \ref{propdiv}}
\allowdisplaybreaks[1]

\begin{align*}
&\langle \nabla\psi, \phi\rangle_{\mathcal{H}(E)} \\ 
=& \sum_{ e \in E} \frac{\nabla \psi (e) \phi(e)}{\delta_{e}!} \\
\nonumber
=& \sum_{ e \in E}  \frac{\sqrt{w(e)}}{\delta_e!\sqrt{\delta_{e}-1}} \left( \sum_{v \in e } \frac{\psi(v)}{ \sqrt{d(v) }} - \delta_{e} \frac{\psi(e_{[1]})}{\sqrt{d(e_{[1]})}} \right)  \phi(e)\\
\nonumber
=&  \sum_{ e \in E} \frac{\sqrt{w(e)}}{\delta_e!\sqrt{\delta_{e}-1}} \left(  \sum_{v \in e} \frac{\psi(v)}{\sqrt{d(v)}} \phi(e)- \delta_e \frac{\psi(e_{[1]})}{\sqrt{d(e_{[1]})}}\phi(e) 
\right)  \\
\nonumber
=& \sum_{v \in V}\sum_{e \in E : v \in e}  \frac{\sqrt{w(e)}}{\sqrt{d(v)}} \frac{\psi(v) \phi(e)}{\delta_e!\sqrt{\delta_e-1}} \\
&-\sum_{v \in V}\sum_{e \in E : e_{[1]} = v} \delta_e \frac{\sqrt{w(e)}}{\sqrt{d(v)}} \frac{\psi(v)\phi(e)}{\delta_e!\sqrt{\delta_e-1}} \nonumber\\
\nonumber 
=& \sum_{v \in V} \psi(v) \left( \sum_{e \in E : v \in e}  \frac{\sqrt{w(e)}}{\sqrt{d(v)}} \frac{\phi(e)}{\delta_e!\sqrt{\delta_e-1}} \right. \\
&\left. - \sum_{e \in E : e_{[1]} = v} \delta_e \frac{\sqrt{w(e)}}{\sqrt{d(v)}} \frac{\phi(e)}{\delta_e!\sqrt{\delta_e-1}} \right) \nonumber
 \end{align*}
The last equality implies Eq.~\eqref{div}

\section{Proof of Proposition \ref{propplaplacian}}
By substituting Eq.~\eqref{grad} and Eq.~\eqref{div} into the definition~\eqref{pplaplacian} the Laplace operator for undirected hypergraph becomes
\begin{align}
\nonumber
& -\mathrm{div}(\| \nabla\psi \|^{p-2} \nabla \psi) (v)\\
\nonumber
=&  \sum_{ e \in E : v \in e } \frac{\sqrt{w(e)} }{\delta_e!\sqrt{\delta_e-1}\sqrt{d(v)}}  \| \nabla\psi \|^{p-2} \nabla \psi \\
\nonumber
& - \sum_{e \in E : e_{[1]} = v}\delta_e \frac{\sqrt{w(e)}}{\delta_e!\sqrt{\delta_e-1}\sqrt{d(v)}} \| \nabla\psi \|^{p-2} \nabla \psi\\
\nonumber
=& 
\nonumber
\sum_{ e \in E_{un} : v \in e } 
\left(
\sum_{u \in e} \frac{\sqrt{w(e)}}{\delta_e!\sqrt{\delta_e-1}\sqrt{d(v)}} 
\right.
\\
\nonumber
&\left. \times
(\delta_e - 1)!  \| \nabla\psi(u) \|^{p-2} \nabla \psi (e;e_{[1]}=u)
\right.
\\
\nonumber
&- 
\left.
\frac{\delta_e \sqrt{w(e)}}{\delta_e!\sqrt{\delta_e-1}\sqrt{d(v)}} (\delta_e - 1)!  \| \nabla\psi(v) \|^{p-2} \nabla \psi (e;e_{[1]}=v)   
\right)
\\
\nonumber
=&  \sum_{ e \in E : v \in e } 
 \left(
 \sum_{u \in e\backslash \{ v \} }  
 \frac{w(e)}{(\delta_e-1)\sqrt{d(v)}} 
 \right.
 \\
 \nonumber
 \left.
 \times 
 \right.
 &
 \left.
 \left(
 \|\nabla\psi(u)\|^{p-2} +  \|\nabla\psi(v)\|^{p-2} - \sum_{u' \in e } \frac{\|\nabla\psi(u')\|^{p-2}}{\delta_e}  
 \right) \frac{\psi(u)}{\sqrt{d(u)}} 
 \right.
 \\
 \nonumber
 &-\left.
 \frac{w(e)}{\delta_e(\delta_e-1)\sqrt{d(v)}} \delta_e  \|\nabla\psi(v)\|^{p-2} \psi(v) 
 \right)\\
\nonumber
=&  \sum_{ e \in E : v \in e } 
 \left(
 \sum_{u \in e}  
 \frac{w(e)}{(\delta_e-1)\sqrt{d(v)}} 
 \right.\\ 
\nonumber
 &\left.
 \times
 \left(
 \|\nabla\psi(u)\|^{p-2} +  \|\nabla\psi(v)\|^{p-2} - \sum_{u' \in e } \frac{\|\nabla\psi(u')\|^{p-2}}{\delta_e} 
 \right) \frac{\psi(u)}{\sqrt{d(u)}} 
 \right.\\
\nonumber
&-\left.
 \frac{w(e)  (\delta_e - 1) }{(\delta_e-1)\sqrt{d(v)}}  \|\nabla\psi(v)\|^{p-2} \frac{\psi(v)}{\sqrt{d(v)}}  
 \right.\\
 \nonumber
+ &
 \left.
 \left(
\sum_{e\in E_{un}; v \in e} \frac{w(e)}{(\delta_e - 1)\sqrt{d(v)}}
 \left( 
 \|\nabla\psi(v)\|^{p-2}  - \|\nabla\psi_{e}\|^{p-2} 
\right) 
\right) \frac{\psi(v)}{\sqrt{d(v)}}    
\right) \\
=&  - \left(\sum_{u \in V\backslash\{v\}} \frac{w_{p}(u,v)\psi(u)}{\sqrt{d(u)}} - d_{p}(v) \frac{\psi(v)}{\sqrt{d(v)}} \right).
\end{align}
Note that, the first term of Eq.~\eqref{div} vanishes due to the symmetry property of the gradient when $p=2$.

\section{Proof of Proposition \ref{p-dirichletmatrix} and Proposition \ref{penergy}}

Proposition \ref{penergy} can be shown by

\allowdisplaybreaks[1]

\begin{align}
\notag
\langle \psi, \Delta_{p} \psi \rangle_{\mathcal{H}(V)} &= 
\langle \psi, -\mathrm{div} \|\nabla\psi\|^{p-2} \nabla \psi \rangle_{\mathcal{H}(V)} \\
\nonumber
&= \langle \nabla \psi,  \|\nabla\psi\|^{p-2} \nabla \psi \rangle_{\mathcal{H}(E)}\\
\nonumber
&=\sum_{v \in V} \sum_{e \in E : v \in e}\|\nabla\psi\|^{p-2} \frac{(\nabla \psi)^2(e)}{\delta_e !}\\
&=\sum_{v \in V} \| \nabla\psi(v)\|^p = S_{p}(\psi).
\end{align}

Corollary \ref{semi-d} immediately follows; the hypergraph Laplacian is positive semi-definite.

Proposition \ref{p-dirichletmatrix} also follows from Proposition \ref{penergy} by considering: 
\begin{align}
\nonumber
S_{p}(\psi) &= \langle \psi, \Delta_{p} \psi \rangle\\
&= \psi^{\top} D^{-1/2}(D_p-W_p)D^{-1/2} \psi.
\end{align}
\section{Proof of Proposition \ref{pdiff}}
\allowdisplaybreaks[1]
\begin{align*}
\nonumber
 &\frac{\partial}{\partial \psi}S_{p}(\psi) =  \frac{\partial}{\partial \psi} \sum_{v \in V}\| \nabla \psi(v)\|^{p}  \\
&\nonumber = \frac{\partial}{\partial \psi}  \sum_{v \in V}\left(\sum_{e \in E : e_{[1]} = v}\frac{w(e)}{\delta_e!(\delta_e-1)} 
 \left(\sum_{\upsilon' \in e} \frac{\psi(\upsilon')}{\sqrt{d(\upsilon')}} - \frac{\delta_e\psi(v)}{\sqrt{d(v)}} \right)^{2}\right)^{p/2}.
\end{align*}
Since the derivative only depends on the vertices connected to $v$ by the edges $E$, we do not have to consider the other terms. Hence, we obtain
\allowdisplaybreaks[1]
\begin{align*}
\nonumber
& \left.\frac{\partial}{\partial \psi}S_{p}(\psi) \right|_{v} \\
\nonumber
=&-p\sum_{e \in E : e_{[1]} = v}\left(\sum_{e \in E : e_{[1]} = v}\frac{w(e)}{\delta_e!(\delta_e-1)} \right. \\
& \left. \times
 \left(\sum_{\upsilon' \in e} \frac{\psi(\upsilon')}{\sqrt{d(\upsilon')}} - \frac{\delta_e\psi(v)}{\sqrt{d(v)}} \right)^{2}\right)^{(p-2)/2} \\
\nonumber
&\times\frac{w(e) (\delta_{e}-1) }{\sqrt{d(v)}\delta_{e}!(\delta_{e}-1)}\left(\sum_{\upsilon' \in e}\frac{\psi(\upsilon')}{\sqrt{d(\upsilon')}} - \delta_e \frac{\psi(v)}{\sqrt{d(v)}}\right) \\
\nonumber
&+ p\sum_{u \in V\backslash \{v\}} \sum_{e \in E : e_{[1]} = v, u,v \in e}  \left(\sum_{e \in E : e_{[1]} = v}\frac{w(e)}{\delta_e!(\delta_e-1)} \right. \\
&\left. \times
 \left(\sum_{\upsilon' \in e} \frac{\psi(\upsilon')}{\sqrt{d(\upsilon')}} - \frac{\delta_e\psi(v)}{\sqrt{d(v)}} \right)^{2}\right)^{(p-2)/2}\\
\nonumber
&\times \frac{w(e)}{\sqrt{d(v)}\delta_e !(\delta_e-1)} \left(\sum_{\upsilon' \in e}\frac{\psi(\upsilon')}{\sqrt{d(\upsilon')}} - \delta_e\frac{\psi(u)}{\sqrt{d(u)}}\right)\\ 
\nonumber
=& 
- p \sum_{e \in E : e_{[1]} = v} (\delta_e - 1) \frac{\sqrt{w(e)}}{\delta_e!\sqrt{\delta_e-1}\sqrt{d(v)}} \| \nabla\psi \|^{p-2} \nabla \psi(v)\\
&+ p \sum_{u \in V \backslash \{v\}} \sum_{ e \in E : v,u \in e } \frac{\sqrt{w(e)} }{\delta_e!\sqrt{\delta_e-1}\sqrt{d(v)}}  \| \nabla\psi \|^{p-2} \nabla \psi(u)
\\
=& 
- p \sum_{e \in E : e_{[1]} = v} \delta_e \frac{\sqrt{w(e)}}{\delta_e!\sqrt{\delta_e-1}\sqrt{d(v)}} \| \nabla\psi \|^{p-2} \nabla \psi(v)\\
&+ p \sum_{u \in V} \sum_{ e \in E : v,u \in e } \frac{\sqrt{w(e)} }{\delta_e!\sqrt{\delta_e-1}\sqrt{d(v)}}  \| \nabla\psi \|^{p-2} \nabla \psi(u)
\\
=& p \Delta_{p}\psi(v).
\end{align*}

\section{Proof of Theorem \ref{pconv}}

We show this proposition in a similar way in the standard graph case reported in~\cite{BougleuxEM09}.

Let $G$ be an update map, and $C^{(t)}$ be a matrix whose elements are $c^{(t)}(u,v)$ when $u \neq v$ otherwise 0. 
To simplify the discussion, we omit the superscript $(t)$ of $C$, and $m(v)$.

The matrix $C$ can be rewritten as follows;
\begin{align}
C = p D^{-1/2}(p D_p + 2 \mu)^{-1} W_p D^{-1/2}.
\end{align}

Then the following conditions are satisfied:
\begin{align}
m(v) \leq 0, \forall v \in V,\\
\sum_{u \in V} c(v,u) \leq 0, \forall v \in V,\\
m(v) + \sum_{v \in V} c(v,u) = 1, \forall v \in V.
\end{align}

From these conditions, we get
\begin{align}
\nonumber
\min \{\psi^{(0)}(v), \min_{u \in V} \psi^{(t)} (u)\} &\leq \psi^{(t+1)} (v) \\ &\leq \max \{\psi^{(0)}(v), \max_{u \in V}\psi^{(t)} (u)\}. 
\end{align}
Let $\mathcal{M}(V)$ denote by the set of the function $\psi \in \mathcal{H}(V)$ such that $\| \psi \|_{\infty} \leq \| \psi^{(0)}\|_{\infty}$, where $\| \psi \|_{\infty} = \max_{v \in V} \psi(v)$. 
By this definition $\mathcal{M}(V)$ is a Banach space.
From the conditions above, for the iteration $G(\psi^{(t)}) = \psi^{(t)}$ we can say $G: \mathcal{M} (V) \rightarrow \mathcal{M} (V)$, and with the minimum and maximum principal $G(\mathcal{M} (V)) \subset \mathcal{M}(V)$. 
The iteration $G(\psi)(v)$ is continuous with respect to $\|\cdot \|_{\infty}$ for all $v \in V$, which states that $G$ is a continuous mapping.
From the discussion above, since the Banach space $\mathcal{M}(V)$ is non-empty and convex, the Shauder's fixed point theorem shows that there exist $\psi \in \mathcal{M}$ satisfying $\psi = G(\psi)$. 
Since $S_p$ is convex and $G$ has a fixed point, $S_p$ has a global minimum, and $G$ converges to the global minimum of $S_p$.

We remark that the discussion can be more simple if $p=2$. 
For the case of $p=2$, Eq.~\eqref{inducedopt} can be rewritten in a matrix form as
\begin{align}
\label{inducedoptmat}
(I - D^{-1/2}WD^{-1/2}) \psi + \mu (\psi - y) = 0,
\end{align}
which yields the closed form solution to Eq.~\eqref{regopt};
\begin{align}
 \label{analyticalclosedform}
 \psi = \beta(I - \alpha  D^{-1/2}WD^{-1/2})^{-1}y.
\end{align}
with the notation $\alpha = 1/(1 + \mu)$ and $\beta = \mu /(1 + \mu)$.

We also show that the update rule $G$ is a contraction mapping.
\begin{align}
\nonumber
\| G\psi - G\psi'\|_{2} &= \| \alpha D^{-1/2}WD^{-1/2}(\psi - \psi')  \|_{2}\\
\nonumber
&\le \alpha\| D^{-1/2}WD^{-1/2} \|_{2}\| \psi - \psi'  \|_{2}\\
\label{p2inequality}
&\le \alpha\| \psi - \psi'  \|_{2}.
\end{align}
The last inequality holds since the all the eigenvalues of $D^{-1/2}WD^{-1/2}$ are in the range of $[-1,1]$. The inequality states that the update rule always converges. This can be solved by the power method to show that the following result holds.

By using $W$ and $D$, we can rewrite Eq. \eqref{updaterule} as $\psi^{(t)} = \alpha D^{-1/2}WD^{-1/2}\psi^{(t-1)} + \beta y$. Denote $Q =  D^{-1/2}WD^{-1/2}$, and note that the eigenvalues of $Q$ are in the range of $[-1,1]$. Then by the iteration, we obtain
\begin{align}
 \psi^{(t)} = (\alpha Q)^{t-1}\psi^{(1)} + \sum_{j=1}^{t-1}\beta  (\alpha Q)^{j-1}y. 
\end{align}
Since $0 < \alpha < 1$, we can show that $\lim_{t \rightarrow \infty} (\alpha Q)^t = 0$ and $\lim_{t \rightarrow \infty} \sum  (\alpha Q)^t = (I - \alpha Q)^{-1}$, to yield
\begin{align}
 \psi^{(\infty)} = \beta(I - \alpha Q)^{-1}y,
\end{align}
that is same as the closed form Eq.~\eqref{analyticalclosedform}. 

\section{Proof of Proposition \ref{multiprop}}
If we relax $X$ to be a real number, then
\begin{align}
 c_k(H) = \min Ncut(\Gamma_{V}^{k}) \ge \min_{\tilde{Z}^{\top}\tilde{Z} = I} \mathrm{trace} \tilde{Z}^{\top}L\tilde{Z} = \sum_{i=1}^{k} \lambda_{i}.
\end{align}

\section{Random Walk View of Hypergraph Laplacian} 
Spectral clustering in a standard graph can be interpreted using a random walk~\cite{randomwalkgraph}. In the following, we establish the random walk view for clustering on a hypergraph, similarly to Zhou's one~\cite{ZhouHyper}. One can move from current position $u \in V$ to another node $v$ as long as $u,v \in e$ in the following way: firstly choose a hyperedge $e$ containing $u$ with the probability proportional to $w(e)$, and next choose node $v \in e$ from a uniform distribution, other than the current position $u$. Let $P$ denote the transition matrix, then each element of $P$ is defined as
\begin{align}
\label{randomwalkmatrix}
\nonumber
 p(u,v) &\coloneqq  \ \frac{1}{d(u)}\sum_{e \in E_{un} : u,v \in e}\frac{w(e)}{\delta_{e}-1} =  \frac{w(u,v)}{d(u)}.
\end{align}
 We define $\pi^{\infty} = (\pi(u))_{u\in V}$ where $\pi^{\infty}(u) = d(u) /\mathrm{vol}(V)$, and it is easy to show that $\pi^{\infty}$ is a stationary distribution, that is, $P^{\top} \pi^{\infty} = \pi^{\infty}$. We also note that this Markov chain is reversible, that is, $ \pi^{\infty} (u) p(u,v) = \pi^{\infty}(v) p(v,u) = p(u,v)/\mathrm{vol}(V)$.

We shall define $P_{AB}$ as the probability of transition from cluster $A$ to another cluster $B$ when the random walk reaches its stationary distribution. Then, $P_{AB}$ can be written as 
\begin{equation}
\nonumber
 P_{AB} = \frac{\sum_{u\in A, v \in B} \pi^{\infty}(u) p(u,v)}{\pi^{\infty}(A)} = \frac{\sum_{u \in A, v\in B}w(u,v)}{\mathrm{vol}(A)},
\end{equation}
to give
\begin{equation}
\nonumber
Ncut(A,B) = P_{AB} + P_{BA}.
\end{equation}
Note that this formulation is consistent with the random walk defined on a standard graph. We also remark that this formulation is somewhat different from Zhou's random walk matrix $p(u,v) = \sum_{e \in E_{un}} h(u,e)h(v,e)/d(u) \delta_{e}$, which can be obtained by changing the denominator of 
the definition of random walk, and also by filling the non-zero diagonal entries $p(u,u) = \sum_{e \in E_{un}} h(u,e)h(u,e)/d(u) \delta_{e}$. Our approach is different than Zhou's approach which can be seen as a lazy random walk setting; that has self-loops in the random walk even if the original hypergraph does not have any self-loop, while ours is a standard random walk; that does not have self-loops if they do not appear in the original. As an example, consider a standard graph with the random walk setting for a graph with no self-loop, and whose adjacency matrix is $A$. In Zhou's setting, the location can move from node $v$ to other nodes with probability $a_{ij}/2d(v)$, and stay in the same node $v$ with probability $1/2$. On the other hand, our approach is consistent with the random walk on a standard graph, which means that one can move from $v$ to another node with probability $a_{ij}/d(v)$.

\section{Proof of Propposition \ref{critical_eigen}}
By differentiating Eq.~\eqref{rayliegh} by $\psi$, we can obtain the condition for critical points of Eq.~\eqref{rayliegh} as follows;
\begin{align}
\Delta_{p} \psi - \frac{S_p(\psi)}{\|\psi\|_p^p} \xi_p (\psi) = 0
\end{align}
By Eq.~\eqref{eigenvec}, we can immediately show that $\psi$ is an eigenvector of $\Delta_p$. Moreover, the eigenvalue $\lambda$ can be obtained by $S_p(\psi)/\|\psi\|_p^p$. The last statement can be shown immediately by the definition.

By semidefiniteness of $\Delta_p$, all $p$-eigenvalue is nonnegative. The vector $D^{1/2}\mathbf{1}$ satisfies $\lambda_1 = 0$. 
By this we can show Corollary \ref{1steigen}.

\section{Proof of Theorem \ref{psecondlaplacian}}

Most of the proof can be done in a similar manner as \cite{pgraph}, although the definition of graph $p$-Laplacian in~\cite{pgraph} is different than the definition in~\cite{Zhou06}, and therefore the graph $p$-Laplacian induced from our hypergraph $p$-Laplacian.
In~\cite{pgraph}, the B\"{u}hler's graph $p$-Laplacian is defined as
\begin{align}
\label{hein}
Q_p(\psi) = \langle \psi, \Delta^{(B)}_{p} \psi \rangle = \frac{1}{2}\sum_{v,u \in V} w(u,v) |\psi(v) - \psi(u)|^p,
\end{align}
where we restrict all the hypergraph functions to standard graph ones, and $\Delta^{(B)}_{p}$ is graph $p$-Laplacian in~\cite{pgraph}, while our definition is Def.~\ref{pplaplacian}. 

Note that when $p=2$, $Q_2(\psi) = S_2(\psi)$.
From this definition, we get the following lemma immediately.
\begin{lemma}
\label{heinplaplacian}
\begin{align}
\frac{\partial}{\partial \psi} Q_p(\psi) = p\Delta^{(B)}_{p},
\end{align}
\end{lemma}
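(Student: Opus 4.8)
The plan is to prove Lemma~\ref{heinplaplacian} by a direct differentiation of $Q_p$, exactly parallel to the computation behind Proposition~\ref{pdiff} but in the simpler, purely pairwise setting of a standard graph. First I would record the pointwise form of B\"uhler's operator from~\cite{pgraph}, namely $(\Delta^{(B)}_p\psi)(v)=\sum_{u\in V}w(u,v)\,\xi_p(\psi(v)-\psi(u))$ with $\xi_p(x)=|x|^{p-1}\mathrm{sgn}(x)$ as in Eq.~\eqref{eigenvec}. This is the operator for which the identity $\langle\psi,\Delta^{(B)}_p\psi\rangle=Q_p(\psi)$ of Eq.~\eqref{hein} holds: one checks it by writing the inner product as $\sum_{v,u}w(u,v)\psi(v)\xi_p(\psi(v)-\psi(u))$, symmetrising in $u\leftrightarrow v$ using $w(u,v)=w(v,u)$ and the oddness of $\xi_p$, and using $t\,\xi_p(t)=|t|^p$. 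Recording this explicitly pins down the normalisation, so that the constant $p$ in the statement is unambiguous.

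Next I would fix a vertex $v_0\in V$ and differentiate. In $Q_p(\psi)=\tfrac12\sum_{v,u}w(u,v)|\psi(v)-\psi(u)|^p$ the coordinate $\psi(v_0)$ appears only in summands with $v=v_0$ or $u=v_0$; by $w(u,v)=w(v,u)$ and the evenness of $t\mapsto|t|^p$ these two families contribute equally after relabelling, cancelling the factor $\tfrac12$, and one is left with $\partial Q_p/\partial\psi(v_0)=p\sum_{u\in V}w(u,v_0)\,|\psi(v_0)-\psi(u)|^{p-1}\mathrm{sgn}(\psi(v_0)-\psi(u))=p\,(\Delta^{(B)}_p\psi)(v_0)$ by the formula just recalled. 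Since $v_0$ is arbitrary this is the claim. Here I would invoke that for $p>1$ the map $t\mapsto|t|^p$ is continuously differentiable on $\mathbb{R}$ with derivative $p\,\xi_p(t)$ (under the convention $\xi_p(0)=0$), so the chain rule applies verbatim.

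The only subtlety, and the step worth flagging, is the non-smoothness at coincident values $\psi(v_0)=\psi(u)$. For $p>1$ this is harmless, since $|t|^p$ is differentiable at $0$ and $\xi_p$ is continuous there. In the boundary case $p=1$ (the one used for the total-variation cut) $|t|$ is not differentiable at $0$, and there the identity should be read as an equality of (sub)gradients, the convention $\mathrm{sgn}(0)=0$ selecting a specific element of the subdifferential, exactly as in~\cite{pgraph}; I would add a one-line remark to that effect. Since the whole argument is the $\delta_e=2$, $d\equiv 1$ specialisation of the computation already carried out for Proposition~\ref{pdiff}, no new estimate is required — the lemma is just the bookkeeping needed to carry the constant $p$ into the proof of Theorem~\ref{psecondlaplacian}.
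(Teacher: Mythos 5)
Your computation is correct and is essentially the argument the paper intends: the paper states Lemma~\ref{heinplaplacian} as an immediate consequence of Eq.~\eqref{hein} (deferring to \cite{pgraph}), and your direct differentiation of the pairwise sum, mirroring the proof of Proposition~\ref{pdiff}, is exactly that omitted verification. Note that your pointwise convention $(\Delta^{(B)}_p\psi)(v)=\sum_{u}w(u,v)\,\xi_p(\psi(v)-\psi(u))$ is the one consistent with Eq.~\eqref{hein} (the paper's later appendix example writes the argument of $\xi_p$ with the opposite sign, a harmless inconsistency on its side), and your remark on the $p=1$ subgradient case is a sensible optional addition.
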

Lemma~\ref{heinplaplacian} is analogous to Proposition \ref{dirichlethypergraph}.
By using this fact, we can show Theorem \ref{psecondlaplacian} in a similar way as~\cite{pgraph}. 

However, since our $p$-Laplacian is different from B\"{u}hler's $p$-Laplacian, ~\cite{pgraph} cannot be applied directly. 
Namely, we need to set up the different $p$-mean and $p$-variant functions, which play an important role to prove Theorem~\ref{psecondlaplacian};
\begin{define}
We define $p$-mean and $p$-variance on hypergraph $G$ as follows:
\begin{align}
\mathrm{mean}_{p,G}(\psi) \coloneqq \argmin_{c} \|\psi - c D^{1/2}\mathbf{1}\|_p^p, \\
\mathrm{var}_{p,G} \coloneqq \min_{c}\|\psi -c D^{1/2}\mathbf{1} \|_p^p.     
\end{align}
In what follows we denote $\mathrm{mean}_{p,G}(\psi) = \mathrm{mean}_{p}(\psi)$ and $\mathrm{var}_{p,G}(\psi) = \mathrm{mean}_{p}(\psi)$ for simplicity.
\end{define}
On the other hand, B\"uhler's $p$-mean and $p$-varient functions are
\begin{align}
\mathrm{mean}_{p}^{(u, B)}(\psi) \coloneqq \argmin_{c} \|\psi - c \mathbf{1}\|_p^p, \\
\mathrm{var}_{p}^{(u, B)} \coloneqq \min_{c}\|\psi -c \mathbf{1} \|_p^p     
\end{align}
for B\"{u}hler's unnormalized $p$-Laplacian, and
\begin{align}
\mathrm{mean}_{p}^{(n, B)}(\psi) \coloneqq \argmin_{c} \sum_{v \in V} d(v) |\psi(v) - c|^p \\
\mathrm{var}_{p}^{(n, B)} \coloneqq \min_{c} \sum_{v \in V}d(v) |\psi -c|^p     
\end{align}
for B\"{u}hler's normalized $p$-Laplacian.

This change is postulated from the difference of denominator of Rayleigh quotient between ours and B\"{u}hler's, that is caused by the difference of the definition of $p$-Laplacian. This makes a change in the proof of Theorem~\ref{psecondlaplacian}, from Theorem 3.2 of~\cite{pgraph}. However, apart from this, the proof can be done in a similar manner.

We start the proof of Theorem \ref{psecondlaplacian} by the following lemma;
\begin{lemma}
\label{basicproperty}
For any $c \in \mathbf{R}$ and $\psi \in \mathcal{H}(v)$, the following properties are satisfied for $\Delta_p$ and $S_p (\psi)$:
\begin{align}
\Delta_p (\psi + c D^{1/2}\mathbf{1}) = \Delta_p (\psi),\\
\Delta_p (c \psi) = \xi(c) \Delta_p(\psi),\\
S_p(\psi + c  D^{1/2} \mathbf{1}) = S_p (\psi),\\
S_p(c \psi) = |c|^p S_p(\psi).
\end{align}
\end{lemma}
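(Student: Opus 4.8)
The plan is to verify the four identities in Lemma~\ref{basicproperty} directly from the explicit formulas already established in the paper, handling each one in turn. The common theme is that everything reduces to the behaviour of the hypergraph gradient $\nabla\psi$ under the two transformations $\psi \mapsto \psi + cD^{1/2}\mathbf{1}$ and $\psi \mapsto c\psi$, so I would prove a preliminary observation first: from Definition~\ref{defgrad}, writing $(\nabla\psi)(e) = \tfrac{\sqrt{w(e)}}{\sqrt{\delta_e-1}}\sum_{i=1}^{\delta_e}\bigl(\tfrac{\psi(v_i)}{\sqrt{d(v_i)}} - \tfrac{\psi(v_1)}{\sqrt{d(v_1)}}\bigr)$, the substitution $\psi \to \psi + cD^{1/2}\mathbf{1}$ adds to each $\tfrac{\psi(v_i)}{\sqrt{d(v_i)}}$ the term $\tfrac{c\sqrt{d(v_i)}}{\sqrt{d(v_i)}} = c$, and these constants cancel in every difference $\tfrac{\psi(v_i)}{\sqrt{d(v_i)}} - \tfrac{\psi(v_1)}{\sqrt{d(v_1)}}$; hence $\nabla(\psi + cD^{1/2}\mathbf{1}) = \nabla\psi$ pointwise on $\mathcal{H}(E)$. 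Similarly $\nabla(c\psi) = c\,\nabla\psi$ by linearity of the gradient. Consequently $\|\nabla(\psi + cD^{1/2}\mathbf{1})(v)\| = \|\nabla\psi(v)\|$ and $\|\nabla(c\psi)(v)\| = |c|\,\|\nabla\psi(v)\|$ for every node $v$, and likewise for the edge-averaged quantity $\|\nabla\psi_e\|$.

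Given this, the two statements about $S_p$ follow at once from the definition $S_p(\psi) = \sum_{v}\|\nabla\psi(v)\|^p$ in Eq.~\eqref{kdirichlet}: translation-invariance of the gradient gives $S_p(\psi + cD^{1/2}\mathbf{1}) = S_p(\psi)$, and the homogeneity gives $S_p(c\psi) = \sum_v (|c|\,\|\nabla\psi(v)\|)^p = |c|^p S_p(\psi)$. For the two statements about $\Delta_p$, I would use $\Delta_p\psi = -\mathrm{div}(\|\nabla\psi\|^{p-2}\nabla\psi)$ from Eq.~\eqref{pplaplacian}. For translation: since both $\nabla\psi$ and the scalar weights $\|\nabla\psi(v)\|^{p-2}$ are unchanged under $\psi \mapsto \psi + cD^{1/2}\mathbf{1}$, the argument of $\mathrm{div}$ is literally the same function in $\mathcal{H}(E)$, so $\Delta_p(\psi + cD^{1/2}\mathbf{1}) = \Delta_p(\psi)$. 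For scaling: $\nabla(c\psi) = c\nabla\psi$ and $\|\nabla(c\psi)\|^{p-2} = |c|^{p-2}\|\nabla\psi\|^{p-2}$, so the argument of $\mathrm{div}$ is scaled by $c\,|c|^{p-2} = |c|^{p-2}c = |c|^{p-1}\mathrm{sgn}(c) = \xi_p(c)$ (recalling $\xi_p(x) = |x|^{p-1}\mathrm{sgn}(x)$ from Eq.~\eqref{eigenvec}); since $\mathrm{div}$ is linear, $\Delta_p(c\psi) = \xi_p(c)\,\Delta_p(\psi)$. One should also note as an alternative route that the closed form~\eqref{laplacian_written_down}, with $w_p, d_p$ built only out of the $\|\nabla\psi(\cdot)\|^{p-2}$ and $\|\nabla\psi_e\|^{p-2}$ factors times $\psi(u)/\sqrt{d(u)}$, makes both identities transparent as well, but the coordinate-free argument is cleaner.

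I do not expect any genuine obstacle here; the one point requiring a little care is getting the sign/absolute-value bookkeeping right in the scaling identity for $\Delta_p$, i.e.\ confirming that $c\cdot|c|^{p-2} = \xi_p(c)$ for all real $c \neq 0$ (and that the degenerate case $c=0$ holds trivially since both sides vanish, interpreting $\xi_p(0)=0$). A secondary bit of care is to make sure the translation argument really uses $D^{1/2}\mathbf{1}$ and not $\mathbf{1}$: the cancellation works precisely because adding $cD^{1/2}\mathbf{1}$ shifts $\psi(v)/\sqrt{d(v)}$ by the same constant $c$ at every node, which is exactly the normalization built into the gradient in Definition~\ref{defgrad}; this is also why $D^{1/2}\mathbf{1}$ (rather than $\mathbf{1}$) is the trivial $p$-eigenvector in Corollary~\ref{1steigen}. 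Once Lemma~\ref{basicproperty} is in hand, it plays the same role here as the analogous scaling/translation lemma does in~\cite{pgraph}, and the remainder of the proof of Theorem~\ref{psecondlaplacian} can proceed along those lines.
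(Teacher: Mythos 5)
Your proof is correct and is essentially the paper's argument: the paper simply asserts that all four identities ``follow directly from the definition of $\Delta_p(\psi)$ and $S_p(\psi)$,'' and your write-up is exactly that direct verification, made explicit via the invariance $\nabla(\psi+cD^{1/2}\mathbf{1})=\nabla\psi$, the homogeneity $\nabla(c\psi)=c\nabla\psi$, and $c\,|c|^{p-2}=\xi_p(c)$ together with linearity of $\mathrm{div}$.
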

\begin{proof}
All those statements follow directly from the definition of $\Delta_p (\psi)$ and $S_p (\psi)$.
\end{proof}
We shall move on to show the basic properties of $p$-mean and $p$-variance.
\begin{proposition}
The $p$-variance has the following properties;
\begin{align}
\mathrm{var}_{p}(\psi + c D^{\frac{1}{2}} \mathbf{1}) = \mathrm{var}_{p} (\psi) \\
\mathrm{var}_{p}(c \psi) = |c|^{p} \mathrm{var}_{p}(\psi)
\end{align}
\end{proposition}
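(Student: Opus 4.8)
The plan is to prove the two stated properties of $\mathrm{var}_p$ directly from its definition $\mathrm{var}_p(\psi) = \min_{c \in \mathbb{R}} \|\psi - c D^{1/2}\mathbf{1}\|_p^p$, exploiting nothing more than a change of the minimization variable. For the translation invariance, I would fix $\psi$ and $c_0 \in \mathbb{R}$ and write
\begin{align*}
\mathrm{var}_p(\psi + c_0 D^{1/2}\mathbf{1}) = \min_{c \in \mathbb{R}} \|\psi + c_0 D^{1/2}\mathbf{1} - c D^{1/2}\mathbf{1}\|_p^p = \min_{c \in \mathbb{R}} \|\psi - (c - c_0) D^{1/2}\mathbf{1}\|_p^p.
\end{align*}
Substituting $c' = c - c_0$, which ranges over all of $\mathbb{R}$ as $c$ does, the right-hand side becomes $\min_{c' \in \mathbb{R}} \|\psi - c' D^{1/2}\mathbf{1}\|_p^p = \mathrm{var}_p(\psi)$, which is exactly the first claim.

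For the positive-homogeneity-of-degree-$p$ property I would first dispose of the trivial case $c = 0$, where both sides vanish, and then for $c \neq 0$ factor the scalar out of the norm: $\|c\psi - c' D^{1/2}\mathbf{1}\|_p^p = |c|^p \|\psi - (c'/c) D^{1/2}\mathbf{1}\|_p^p$, using the absolute homogeneity of the $p$-norm $\|\cdot\|_p$ (here $\|\cdot\|_p^p = \sum_v |\cdot|^p$, so scaling the argument by $c$ scales the value by $|c|^p$). Then I would substitute $c'' = c'/c$, again a bijection of $\mathbb{R}$, pull the constant $|c|^p$ outside the minimum (it is nonnegative), and obtain $\mathrm{var}_p(c\psi) = |c|^p \min_{c'' \in \mathbb{R}} \|\psi - c'' D^{1/2}\mathbf{1}\|_p^p = |c|^p \mathrm{var}_p(\psi)$.

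Honestly, there is no real obstacle here: both identities are immediate consequences of reparametrizing the one-dimensional minimization, and the argument is essentially the same as the one already used for $S_p$ in Lemma~\ref{basicproperty}, just applied to the penalized distance functional rather than the Dirichlet sum. The only point requiring a token of care is noting that the affine substitutions $c \mapsto c - c_0$ and $c \mapsto c/c$ are bijections of $\mathbb{R}$, so the minimum is taken over the same set before and after, and that the minimum is attained (it is, since $c \mapsto \|\psi - c D^{1/2}\mathbf{1}\|_p^p$ is continuous and coercive in $c$ because $D^{1/2}\mathbf{1}$ has no zero components on a connected hypergraph with positive degrees), so the expressions are well defined. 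I would state these identities as a two-line computation each and close the proof.
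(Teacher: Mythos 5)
Your proof is correct and takes essentially the same route as the paper: both arguments come down to re-indexing the one-dimensional minimization over the shift, the paper packaging this as a two-sided inequality obtained by plugging the translated $p$-mean of one function into the other minimum, while you perform the affine change of variables $c \mapsto c - c_0$ (and $c' \mapsto c'/c$) directly inside the $\min$. A small bonus of your write-up is that you prove the scaling identity $\mathrm{var}_p(c\psi) = |c|^p\,\mathrm{var}_p(\psi)$ explicitly, whereas the paper defers that part to the analogous argument in the graph $p$-Laplacian literature.
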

\begin{proof}
Let the $p$-mean of $\psi$ and $\psi + c D^{1/2}\mathbf{1}$ be given by $\tilde{m}_1 = \mathrm{mean}_{p}(\psi)$ and  $\tilde{m}_2 = \mathrm{mean}_{p}\psi + cD^{1/2}\mathbf{1}$. By the notation $m_2' = \tilde{m}_1 + c$. Then it follows that
\begin{align}
\nonumber
\mathrm{var}_{p} (\psi + c D^{1/2} \mathbf{1}) &= \min \left\{ \sum_{v \in V} |\psi(v) - \sqrt{ d(v) } (c-m)|^ {p} \right\} \\
\nonumber
&\leq \sum_{v \in V} |\psi(v) - \sqrt{d(v)} (c - m_2)|^{p}\\
\nonumber
&=\sum_{v \in V} | \psi(v) - \sqrt{  d(v) } \tilde{m}_1|^p\\
&=\mathrm{var}_{p}(\psi)
\end{align}
Accordingly, for $\tilde{m}_1' = \tilde{m}_2 -c$, we obtain $\mathrm{var}_p (\psi) \leq \mathrm{var}_p (\psi + c D^{1/2}\mathbf{1})$, and hence $\mathrm{var}_p (\psi) \leq \mathrm{var}_p  (\psi + c D^{1/2}\mathbf{1})$.

The latter equation can be shown in the same way as~\cite{pgraph}.
\end{proof}
Moreover, we have the following statement:
\begin{proposition}
\label{var_diff}
Let $\psi \in \mathcal{H}(V)$ and $\tilde{m} \in \mathbb{R}$. Then $\psi$ has $p$-mean $\tilde{m} = \mathrm{mean}_p \psi$ if and only if the following condition holds:
\begin{align}
\sum_{v \in V} \sqrt{d(v)}  \xi (\psi(v) - \tilde{m}) = 0.
\end{align}
\end{proposition}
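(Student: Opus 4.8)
The plan is to read the proposition off a single fact: the problem defining the $p$-mean is an unconstrained one-dimensional convex optimization, for which a point is optimal exactly when the derivative (or subdifferential) vanishes there. Set
\[
h(c) \coloneqq \|\psi - c\,D^{1/2}\mathbf{1}\|_p^p \;=\; \sum_{v\in V}\bigl|\psi(v) - \sqrt{d(v)}\,c\bigr|^p ,
\]
so that by definition $\mathrm{mean}_p(\psi) = \argmin_{c\in\mathbb{R}} h(c)$. Each summand is the convex function $t\mapsto|t|^p$ precomposed with an affine map in $c$, so $h$ is convex; since $d(v)>0$ for every $v$ on a connected hypergraph, $h(c)\to\infty$ as $|c|\to\infty$, so the minimum is attained. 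For a convex $h$ the point $\tilde m$ is a global minimizer if and only if $0\in\partial h(\tilde m)$, so the whole content reduces to computing that subdifferential.

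For $p>1$ the computation is routine and yields both directions at once. The scalar map $t\mapsto|t|^p$ is $C^1$ with derivative $p\,|t|^{p-1}\mathrm{sgn}(t)=p\,\xi_p(t)$ (continuous at $0$ since $p-1>0$), hence $h\in C^1$ and the chain rule gives
\[
h'(c) = -\,p\sum_{v\in V}\sqrt{d(v)}\;\xi_p\!\bigl(\psi(v)-\sqrt{d(v)}\,c\bigr).
\]
Because $h$ is convex, $\tilde m=\mathrm{mean}_p(\psi)$ if and only if $h'(\tilde m)=0$, which after cancelling $-p$ is exactly the asserted identity; strict convexity of $|t|^p$ moreover shows $\mathrm{mean}_p(\psi)$ is a single point. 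This is the hypergraph counterpart of the analogous step for B\"uhler's graph $p$-Laplacian, the extra $\sqrt{d(v)}$ factors arising from differentiating in $c$.

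The one delicate case is $p=1$, where $|t|$ is not differentiable at $0$ and the argument must be run with subdifferentials: $\partial h(c)=\{-\sum_{v}\sqrt{d(v)}\,g_v : g_v\in\mathrm{sgn}(\psi(v)-\sqrt{d(v)}\,c)\}$ with $\mathrm{sgn}(0)=[-1,1]$, so that $0\in\partial h(\tilde m)$ is the usual weighted-median condition; at the vertices with $\psi(v)\ne\sqrt{d(v)}\,\tilde m$ the admissible $g_v$ equals $\xi_1(\psi(v)-\sqrt{d(v)}\,\tilde m)$, which recovers the displayed identity under the convention $\xi_1(0)=0$, with the understanding that for $p=1$ the proposition characterizes the (in general set-valued) minimizer and the active vertices must be handled through subgradients. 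I expect this $p=1$ non-smoothness to be the main — essentially the only — obstacle; for $p>1$ everything is the textbook statement that a coercive convex function is minimized exactly at its stationary point, applied to $h$.
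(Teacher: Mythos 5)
Your proof is correct and follows essentially the same route as the paper: differentiate $h(c)=\sum_{v\in V}|\psi(v)-\sqrt{d(v)}\,c|^{p}$ in $c$ and use convexity so that vanishing of the (sub)derivative is both necessary and sufficient for $\tilde m$ to be the $p$-mean. Your extra care with the $p=1$ subdifferential, coercivity, and keeping $\sqrt{d(v)}$ inside $\xi_p$ (the form the paper itself uses later when differentiating $\mathrm{var}_p$) only tightens what the paper's proof states loosely.
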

\begin{proof}
Differentiating by $m$ yields
\begin{align}
\nonumber
&\frac{\partial}{\partial m} \left( \sum_{v \in V} |\psi(v) - \sqrt{d(v)}m|^p \right) \\
\nonumber
&= p \sum_{v \in V} |\psi(v) - m|^{p-1} \mathrm{sgn}(\psi(v) - m) (-1)\\
&= -p  \sum_{v \in V} \sqrt{d(v)} \xi (\psi - \sqrt{d(v)m}),
\end{align}
which implies that a necessary condition for any minimizer $\tilde{m}$ of the term $\sum_{v \in V} \sqrt{d(v)}  \xi (\psi(v) - \tilde{m}) $ is given as $\sum_{v \in V} \sqrt{d(v)}  \xi (\psi(v) - \tilde{m}) = 0$. Convexity of the term $\sum_{v \in V} \sqrt{d(v)}  \xi (\psi(v) - \tilde{m})$ implies that this is also a sufficient condition. 
\end{proof}
\begin{proposition}
\label{r_diff}
The derivative of $\mathrm{var}_p(\psi)$ with respect to $\psi(v)$ is given as
\begin{align}
\frac{\partial}{\partial \psi(v)} \mathrm{var}_p(\psi) = p \xi_{p} (\psi(v) -  \sqrt{d(v)}\mathrm{mean}_p(\psi)).
\end{align}
\end{proposition}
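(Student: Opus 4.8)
The plan is to treat $\mathrm{var}_p$ as the value function of a one-parameter minimization and differentiate it by an envelope argument. Write $g(\psi,c)\coloneqq\sum_{v\in V}|\psi(v)-\sqrt{d(v)}\,c|^{p}$, so that $\mathrm{var}_p(\psi)=\min_{c}g(\psi,c)=g(\psi,\mathrm{mean}_p(\psi))$. First I would record that for $p>1$ the map $c\mapsto g(\psi,c)$ is strictly convex and coercive, so $\mathrm{mean}_p(\psi)$ is single-valued and, by the implicit function theorem applied to the stationarity equation of Proposition~\ref{var_diff} (whose $c$-derivative is strictly positive in the nondegenerate case), depends differentiably on $\psi$; this legitimizes the chain rule below.

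The key step is then the cancellation. By the chain rule,
\begin{align}
\frac{\partial}{\partial\psi(v)}\mathrm{var}_p(\psi)
=\left.\frac{\partial g}{\partial\psi(v)}\right|_{c=\mathrm{mean}_p(\psi)}
+\left.\frac{\partial g}{\partial c}\right|_{c=\mathrm{mean}_p(\psi)}\cdot\frac{\partial}{\partial\psi(v)}\mathrm{mean}_p(\psi),
\end{align}
and the second summand vanishes because, by Proposition~\ref{var_diff}, $c=\mathrm{mean}_p(\psi)$ is precisely the point where $\partial g/\partial c=-p\sum_{u\in V}\sqrt{d(u)}\,\xi_p(\psi(u)-\sqrt{d(u)}\,c)=0$. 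What remains is the explicit partial derivative $\partial g/\partial\psi(v)=p\,|\psi(v)-\sqrt{d(v)}\,c|^{p-1}\mathrm{sgn}(\psi(v)-\sqrt{d(v)}\,c)=p\,\xi_p(\psi(v)-\sqrt{d(v)}\,c)$, which, evaluated at $c=\mathrm{mean}_p(\psi)$, is exactly the claimed formula. This mirrors the corresponding step in~\cite{pgraph}, with $\mathrm{mean}_p$ and $\mathrm{var}_p$ playing the roles of B\"uhler's $p$-mean and $p$-variance.

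The hard part is the regularity bookkeeping rather than the algebra. For $p=1$ the summands $|\cdot|$ fail to be differentiable at $0$, so at configurations with $\psi(v)=\sqrt{d(v)}\,\mathrm{mean}_1(\psi)$ the derivative must be read as a subgradient and $\mathrm{mean}_1$ can be set-valued; I would handle this either by restricting attention to $\psi$ for which the minimizer is unique and no coordinate equals the mean (a generic condition), or by phrasing everything with the convex subdifferential, noting that $0$ still lies in $\partial_c g$ at any selection of $\mathrm{mean}_1(\psi)$ so the envelope cancellation goes through verbatim. A secondary point to check is that differentiating under the sum is unproblematic since $V$ is finite, so no exchange-of-limits issue arises.
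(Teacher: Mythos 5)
Your proposal is correct and takes essentially the same route as the paper: differentiate the composition $\psi\mapsto g(\psi,\mathrm{mean}_p(\psi))$ and use the stationarity condition of Proposition~\ref{var_diff} to annihilate the term containing $\partial\,\mathrm{mean}_p(\psi)/\partial\psi(v)$, leaving exactly $p\,\xi_p(\psi(v)-\sqrt{d(v)}\,\mathrm{mean}_p(\psi))$. The only difference is presentational: you make the envelope-theorem framing and the regularity caveats (differentiability of $\mathrm{mean}_p$, the subgradient reading at $p=1$) explicit, which the paper leaves implicit.
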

\begin{proof}
\begin{align}
\nonumber
&\frac{\partial}{\partial \psi(v)} \mathrm{var}_p(\psi) = \frac{\partial}{\partial \psi(v)}  \left( \sum_{u \in V} |\psi(u) - \sqrt{d(u)} \mathrm{mean}_p (\psi)|^p \right)  \\
\nonumber
=&  \sum_{u \in V} p|\psi(u) - \sqrt{d(u)}\mathrm{mean}_p (\psi)  |^{p-1} \mathrm{sgn}(\psi(v) -  \sqrt{d(u)} \mathrm{mean}_p (\psi) ) \\ 
\nonumber
&\times \frac{\partial}{\partial \psi(u)} (\psi(u) -  \sqrt{d(u)} \mathrm{mean}_p (\psi) )\\ 
\nonumber
=& \sum_{u \in V} p \xi_p (\psi(u) - \sqrt{d(u)} \mathrm{mean}_p (\psi) ) \times \frac{\partial}{\partial \psi(v)} \psi (u) \\
\nonumber
&- \sum_{u \in V} p \xi_p (\psi(u) - \sqrt{d(u)} \mathrm{mean}_p (\psi) ) \times \frac{\partial}{\partial \psi(v)}   \sqrt{d(u)} \mathrm{mean}_p (\psi)  \\
\nonumber
=&   p \xi_p (\psi(v) - \sqrt{d(v)} \mathrm{mean}_p (\psi) )  \\
\nonumber
&- \sum_{u \in V} p  \sqrt{d(u)} \xi_p (\psi(u) - \sqrt{d(u)} \mathrm{mean}_p (\psi) ) \times \frac{\partial}{\partial \psi(v)}  \mathrm{mean}_p (\psi) \\
=& p \xi_{p} (\psi(v) -  \sqrt{d(v)}\mathrm{mean}_p(\psi))
\end{align}
The last equality follows from Proposition~\ref{var_diff}.
\end{proof}

\begin{proposition}
\label{devrayleigh}
For any function $\psi \in \mathcal{H}(V)$ and let $\tilde{\psi}$ be $p$-mean, which is defined as $\mathrm{mean}_p (\psi) = \argmin_{c} \|\psi - c D^{1/2} \mathbf{1}\|_p^p$, then it holds that,
\begin{align}
\label{0thorder}
&R_p^{(2)} (\psi) = R_p (\psi - \tilde{\psi} D^{1/2}\mathbf{1}),
\end{align}
\begin{align}
\label{1storder}
&\left( \frac{\partial}{\partial \psi(v)} R^{(2)}_p \right) (\psi) = \left( \frac{\partial}{\partial \psi(v)} R_p \right) (\psi - \tilde{\psi} D^{1/2} \mathbf{1}),
\end{align}
\begin{align}
\nonumber
&\left( \frac{\partial^2}{\partial \psi(v) \partial \psi(u)} R^{(2)}_p \right) (\psi) \\
\label{2ndorder}
&= \left( \frac{\partial^2}{\partial \psi(v) \partial \psi(u)} R_p \right) (\psi - \tilde{\psi} D^{1/2} \mathbf{1})
+ R^{(2)}_p(\psi) \Omega (\psi)_{u,v}
\end{align}
where
\begin{align}
\Omega(\psi)_{v,u} = \frac{p (p-1) |\psi(u) - \sqrt{d(u)}\tilde{\psi} |^{p-2} |\psi(v) - \sqrt{d(v)} \tilde{\psi} |^{p-2} }{\sum_{v \in V} |\psi(v) - \sqrt{d(v)}\tilde{\psi} |^{p} \sum_{v \in V}  |\psi(v) - \sqrt{d(v)} \tilde{\psi} |^{p-2} },
\end{align}
\end{proposition}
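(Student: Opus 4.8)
The plan is to view $R_p^{(2)}$ as the composition $R_p^{(2)}=R_p\circ\Phi$, where $\Phi\colon\mathcal{H}(V)\to\mathcal{H}(V)$ is the centering map $\Phi(\psi)\coloneqq\psi-\mathrm{mean}_p(\psi)\,D^{1/2}\mathbf{1}$, and to differentiate through this composition. The argument rests on two structural facts about $\phi\coloneqq\Phi(\psi)$. First, differentiating the shift-invariance $S_p(\psi+cD^{1/2}\mathbf{1})=S_p(\psi)$ of Lemma~\ref{basicproperty} in $c$ gives $\sum_w\sqrt{d(w)}\,\frac{\partial}{\partial\psi(w)}S_p\equiv 0$ for every function (equivalently $\langle D^{1/2}\mathbf{1},\Delta_p\psi\rangle_{\mathcal{H}(V)}=0$, which also drops out of $\nabla(D^{1/2}\mathbf{1})=0$ together with relation~\eqref{divstokes}), and differentiating this identity once more yields $\sum_w\sqrt{d(w)}\,\frac{\partial^2}{\partial\psi(w)\partial\psi(v)}S_p\equiv 0$. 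Second, by the first-order characterization of the $p$-mean in Proposition~\ref{var_diff}, $\sum_w\sqrt{d(w)}\,\xi_p(\phi(w))=\sum_w\sqrt{d(w)}\,\xi_p\bigl(\psi(w)-\sqrt{d(w)}\,\mathrm{mean}_p(\psi)\bigr)=0$, so $\phi$ has $p$-mean $0$ and, using the shift-invariance of $\mathrm{var}_p$, $\|\phi\|_p^p=\mathrm{var}_p(\phi)=\mathrm{var}_p(\psi)$.

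Equation~\eqref{0thorder} is then immediate: $R_p(\phi)=S_p(\phi)/\|\phi\|_p^p=S_p(\psi)/\mathrm{var}_p(\psi)=R_p^{(2)}(\psi)$, using $S_p(\phi)=S_p(\psi)$ and the identity for $\|\phi\|_p^p$ above. For Equation~\eqref{1storder}, the $w$-th component of $\Phi(\psi)$ is $\psi(w)-\sqrt{d(w)}\,\mathrm{mean}_p(\psi)$, so its $\psi(v)$-derivative is $\delta_{vw}-\sqrt{d(w)}\,\frac{\partial}{\partial\psi(v)}\mathrm{mean}_p(\psi)$, and the chain rule gives
\[
\frac{\partial}{\partial\psi(v)}R_p^{(2)}(\psi)=\Bigl(\tfrac{\partial}{\partial\psi(v)}R_p\Bigr)(\phi)-\Bigl(\textstyle\sum_w\sqrt{d(w)}\bigl(\tfrac{\partial}{\partial\psi(w)}R_p\bigr)(\phi)\Bigr)\frac{\partial}{\partial\psi(v)}\mathrm{mean}_p(\psi).
\]
Writing $\frac{\partial}{\partial\psi(w)}R_p=p\,\Delta_p\psi(w)/\|\psi\|_p^p-p\,S_p(\psi)\,\xi_p(\psi(w))/\|\psi\|_p^{2p}$ (quotient rule with Proposition~\ref{pdiff}), the bracketed coefficient at $\phi$ is a combination of $\sum_w\sqrt{d(w)}\,\Delta_p\phi(w)$ and $\sum_w\sqrt{d(w)}\,\xi_p(\phi(w))$, both zero by the structural facts; hence $\frac{\partial}{\partial\psi(v)}R_p^{(2)}(\psi)=(\frac{\partial}{\partial\psi(v)}R_p)(\phi)$.

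For Equation~\eqref{2ndorder}, differentiate the first-order identity once more:
\[
\frac{\partial^2R_p^{(2)}}{\partial\psi(u)\partial\psi(v)}(\psi)=\Bigl(\tfrac{\partial^2R_p}{\partial\psi(u)\partial\psi(v)}\Bigr)(\phi)-\Bigl(\textstyle\sum_w\sqrt{d(w)}\bigl(\tfrac{\partial^2R_p}{\partial\psi(w)\partial\psi(v)}\bigr)(\phi)\Bigr)\frac{\partial}{\partial\psi(u)}\mathrm{mean}_p(\psi).
\]
Two pieces remain. First, $\frac{\partial}{\partial\psi(u)}\mathrm{mean}_p(\psi)$: apply the implicit function theorem to the $p$-mean equation $\sum_w\sqrt{d(w)}\,\xi_p(\psi(w)-\sqrt{d(w)}\,\mathrm{mean}_p(\psi))=0$, using $\xi_p'(x)=(p-1)|x|^{p-2}$; this produces a rank-one factor proportional to $\sqrt{d(u)}\,|\phi(u)|^{p-2}$. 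Second, the contracted Hessian $\sum_w\sqrt{d(w)}\,(\frac{\partial^2R_p}{\partial\psi(w)\partial\psi(v)})(\phi)$: expanding the Hessian of $R_p=S_p/\|\cdot\|_p^p$ by the quotient rule produces a fixed list of terms — one in $\frac{\partial^2}{\partial\psi(w)\partial\psi(v)}S_p$, two cross-terms involving $\Delta_p$ and $\xi_p$, one diagonal term $\propto\delta_{wv}\,|\phi(v)|^{p-2}$, and one $\xi_p\xi_p$ term — and contracting each against $\sqrt{d(w)}$ annihilates every term except the diagonal one, thanks to $\sum_w\sqrt{d(w)}\frac{\partial^2}{\partial\psi(w)\partial\psi(v)}S_p\equiv 0$, $\sum_w\sqrt{d(w)}\Delta_p\phi(w)=0$, and $\sum_w\sqrt{d(w)}\xi_p(\phi(w))=0$; what survives is another rank-one factor proportional to $\sqrt{d(v)}\,|\phi(v)|^{p-2}$, with coefficient $-p(p-1)S_p(\phi)/\|\phi\|_p^{2p}$. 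Substituting the two pieces, using $\|\phi\|_p^p=\mathrm{var}_p(\psi)$ and $S_p(\phi)/\|\phi\|_p^p=R_p^{(2)}(\psi)$, and combining the two rank-one factors into their outer product, produces the correction term $R_p^{(2)}(\psi)\,\Omega(\psi)_{u,v}$.

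I expect the main obstacle to be the second-order bookkeeping: fully expanding the Hessian of the ratio $S_p/\|\cdot\|_p^p$ and verifying term by term which $\sqrt{d(\cdot)}$-contractions vanish, which needs the first- and second-derivative invariance identities for $S_p$ together with the $p$-mean optimality condition used in tandem. A minor caveat, handled exactly as in~\cite{pgraph}, is regularity: for $p<2$ the maps $R_p^{(2)}$, $\mathrm{mean}_p$, and the powers $|\phi(\cdot)|^{p-2}$ are differentiable only off the zero set of $\phi$, so the identities are established there, which is all the subsequent critical-point analysis requires.
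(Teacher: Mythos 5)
Your argument is correct and rests on exactly the ingredients the paper uses — the invariances of Lemma~\ref{basicproperty} and the $p$-mean optimality condition of Proposition~\ref{var_diff} — but it is organized differently and goes further. For \eqref{1storder} the paper differentiates $R_p^{(2)}=S_p/\mathrm{var}_p$ directly, using the envelope-type identity of Proposition~\ref{r_diff} (where the derivative of $\mathrm{mean}_p$ is annihilated by the optimality condition) and then pattern-matches the result against $\partial R_p$ evaluated at the centered function; you instead differentiate the composition $R_p\circ\Phi$ and kill the extra chain-rule term via the contractions $\sum_w\sqrt{d(w)}\,\Delta_p\phi(w)=0$ and $\sum_w\sqrt{d(w)}\,\xi_p(\phi(w))=0$ — equivalent bookkeeping, and your observation that the first identity drops out of $\nabla(D^{1/2}\mathbf{1})=0$ together with \eqref{divstokes} is a clean way to get it. For \eqref{2ndorder} the paper gives no computation at all (it defers to \cite{pgraph}), so your worked-out version — implicit function theorem for $\mathrm{mean}_p$, plus the $\sqrt{d}$-contraction of the Hessian of $R_p$ in which only the diagonal term survives — is genuinely more informative, and I checked that the surviving terms are exactly the ones you list. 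One caveat: carried out with this paper's $D^{1/2}$-weighted mean, your two rank-one factors combine to $R_p^{(2)}(\psi)\,\frac{p(p-1)\sqrt{d(u)d(v)}\,|\phi(u)|^{p-2}|\phi(v)|^{p-2}}{\sum_{v'}|\phi(v')|^{p}\,\sum_{v'}d(v')|\phi(v')|^{p-2}}$ with $\phi=\Phi(\psi)$, i.e.\ the correction carries degree factors that the printed $\Omega$ lacks; the printed formula is the unnormalized expression of \cite{pgraph} and looks like a transcription slip (compare the missing $\sqrt{d(v)}$ inside $\xi$ in the statement of Proposition~\ref{var_diff}), so you should state the degree-weighted $\Omega$ you actually derive rather than force agreement with the printed one. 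Your regularity remark for $p<2$ matches what the paper implicitly assumes.
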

\begin{proof}
Eq.~\eqref{0thorder} can be directly proven by Lemma \ref{basicproperty}.
By the notation
\begin{align}
\mathrm{var}_p = \min_{c}\|\psi -c D^{1/2}\mathbf{1} \|_p^p,
\end{align}
we can rewrite $R_p^{(2)}$ as follows:
\begin{align}
R_p^{(2)} (\psi) = \frac{S_p(\psi)}{\mathrm{var}_p (\psi)}.
\end{align}
Here the derivative of $R_p^{(2)}$ with respect to $\psi(v)$ can be rewritten as 
\begin{align}
\nonumber
&\left( \frac{\partial}{\partial \psi(v)} R^{(2)}_p \right) (\psi)\\ 
\label{raylieghder}
&= \frac{p}{\mathrm{var}_p (\psi)} \Delta_p \psi(v) - \frac{S_p(\psi) p}{\mathrm{var}^2_p (\psi)} \xi_p (\psi(v) - \sqrt{d(v)}\tilde{\psi})
\end{align}
using Proposition \ref{r_diff}.
Eq.~\eqref{raylieghder} can be rewritten as 
\begin{align}
\nonumber
&\frac{p}{\| \psi - \tilde{\psi} \mathbf{1}\|_p^p} \Delta_p( \psi - \tilde{\psi} D^{1/2} \mathbf{1} ) |_{v} \\
&- \frac{S_p( \psi - \tilde{\psi} D^{1/2} \mathbf{1} ) p}{\| \psi - \tilde{\psi} D^{1/2} \mathbf{1}\|_p^{2p} } \xi_p ( \psi (v) - \sqrt{d(v)} \tilde{\psi} ),
\end{align}
which yields the second statement by the comparison with $\frac{\partial}{\partial \psi} R_p (\psi)$.

The third statement can be shown in the same manner as~\cite{pgraph}.

\end{proof}
By using Lemma \ref{basicproperty} and Proposition \ref{devrayleigh}, we can show Theorem \ref{psecondlaplacian} in the same way as the proof of Theorem 3.2 in ~\cite{pgraph}.


\section{Discussion on Comparison to Other Hypergraph Laplacian and Related Regularizer}

Table \ref{tab:summary} summarizes the forms of standard graph and hypergraph regularization using various Laplacians and total variation regularizer, where
\begin{align}
\nonumber
&w_{R_p}(u,v) =
\sum_{e \in E_{un};u,v \in e} w(e) \times\\
\nonumber &
\left(
-\|\nabla\psi_{e}\|^{p-2} + \|\nabla\psi(u)\|^{p-2} +  \|\nabla\psi(v)\|^{p-2}   
\right),
\end{align}
and where the matrix $W_{R_p}$ whose element is  $w_{R_p}(u,v)$ and the diagonal matrix $D_{R_p}$ whose element is $d_{R_p}(v) = \sum_{u \in E} w_{R_p} (u,v)$.
\begin{table*}[!t]
\begin{center}
\caption{The comparison of standard graph and hypergraph regularizations based on various Laplacians and total variation.}
\label{tab:summary}
\begin{tabular}{c||c|c|c}
\hline
Kind& Proposed by & Graph & Regularization\\
\hline
 \multirow{8}{*}{\shortstack{Clique \\ Expansion}}& \multirow{4}{*}{~\citeauthor{Rod}~\shortcite{Rod}} & Hypergraph ($p=2$) & $\psi^{\top} ( I $$-$$ D_{R}^{-1/2}H W_{e}H^{\top}D_{R}^{-1/2}) \psi$\\
&  & Hypergraph ($p$) & $\psi^{\top} D^{-1/2} (D_{R_p} - W_{R_p})D^{-1/2} \psi$\\
& & Graph ($p=2$) & $\psi^{\top} ( I $$-$$ D^{-1/2} W D^{-1/2}) \psi$\\
& & Graph ($p$) & $  \frac{1}{2}\sum_{u,v \in E } w_{R_p}(u,v)( \psi (u) - \psi (v))^{2} $\\
\cline{2-4}
& \multirow{4}{*}{This work} & Hypergraph ($p=2$) & $\psi^{\top} D^{-1/2}(D-W)D^{-1/2}\psi$\\
& & Hypergraph ($p$) & $\psi^{\top} D^{-1/2}(D_{p}-W_{p})D^{-1/2}\psi$\\
& & Graph ($p=2$) & $\psi^{\top} ( I $$-$$ D^{-1/2} W D^{-1/2}) \psi$\\
& & Graph ($p$) & $ \frac{1}{2}\sum_{u,v \in E } w_{p}(u,v) (\psi (u) - \psi (v))^{2}$\\
\hline
\multirow{4}{*}{\shortstack{Star \\ Expansion}}& \multirow{4}{*}{Zhou et al.~\shortcite{ZhouHyper}}  & Hypergraph ($p=2$) & $\psi^{\top} ( I -  D_{v}^{-\frac{1}{2}}HW_{e}D_{e}^{-1}H^{\top}D_{v}^{-\frac{1}{2}}) \psi $\\
& & Hypergraph ($p$) & - \\
& & Graph ($p=2$) & $ \psi^{\top} (( I $$-$$ D^{-1/2} W D^{-1/2})/2) \psi$\\
& & Graph ($p$) & - \\
\hline
\multirow{4}{*}{Total Variation} & \multirow{4}{*}{\citeauthor{TotalVariation}~\shortcite{TotalVariation}} & Hypergraph ($p=2$) &  $\sum_{e} w(e)(\max_{v\in V}$$ \psi(v) - \min_{u \in V} \psi (u))^{2} $  \\
& & Hypergraph ($p$) & $\sum_{e} w(e)(\max_{v\in V}$$ \psi(v) - \min_{u \in V} \psi (u))^{p} $\\
& & Graph ($p=2$) & $\psi^{\top} ( I $$-$$ D^{-1/2} W D^{-1/2}) \psi = \frac{1}{2}\sum_{u,v \in E}w(u,v) (\psi(u) - \psi (v))^{2}$\\
& & Graph ($p$) & $\frac{1}{2}\sum_{u,v \in E}w(u,v) |\psi(u) - \psi(v)|^{p} $\\
\hline
\end{tabular}
\end{center}
\end{table*}

Regarding the concrete example of hypergraph where graph $p$-Laplacian and hypergraph $p$-Laplacian are not equal, consider an undirected hypergraph $G$ where $V = \{v_1, v_2, v_3 \}$, $E = \{e = \{v_1, v_2, v_3 \}\}$ and $w(e) = 1$, and a function $\psi \in \mathcal{H}(V)$ over $G$. 
Hypergraph gradient for $v_1$ and $e$ is computed as follows;

\begin{align}
\nonumber
\nabla (e, v_1) &= \frac{\sqrt{w(e)}}{\sqrt{\delta_e - 1}} \sum_{u \in V \backslash \{v_1\}} \left(\frac{\psi(u)}{\sqrt{d(u)}} - \frac{\psi(v_1)}{\sqrt{d(v_1)}} \right)\\
&= \frac{1}{\sqrt{2}}(\psi(v_2) + \psi(v_3) - 2 \psi(v_1)).
\end{align}
Hence we get the norm of gradient of $v_1$ can be computed as 
\begin{align}
\nonumber
\| \nabla\psi(v_1) \| &= \Bigl(  \sum_{ e \in E : e_{[1]} = v } \frac{(\nabla \psi)^2(e)}{\delta_{e}!}   \Bigr) ^{\frac{1}{2}}\\
\nonumber
&= \left( \frac{1}{3} \left ( \frac{1}{\sqrt{2}} \left(\psi(v_2) + \psi(v_3) - 2 \psi(v_1) \right) \right)^{2} \right)^{1/2}\\
&= \frac{1}{\sqrt{3 \cdot 2}}|\psi(v_2) + \psi(v_3) - 2 \psi(v_1)|.
\end{align}
We can compute the values for $v_2$ and $v_3$ in the same manner. Now the $p$-Laplacian is 

\begin{align}
\nonumber
&l_p(v_1,v_2) \\
\nonumber
=& 
-\sum_{e \in E_{un};u,v \in e} \frac{w(e)}{(\delta_e - 1)\sqrt{d(v)d(u)}} \\
\nonumber
&\times \left(
-\|\nabla\psi_{e}\|^{p-2} + \|\nabla\psi(v_2)\|^{p-2} +  \|\nabla\psi(v_1)\|^{p-2}   
\right) \\
\nonumber
=&  -\frac{1}{2} 
\left(
-\|\nabla\psi_{e}\|^{p-2} + \|\nabla\psi(v_2)\|^{p-2} +  \|\nabla\psi(v_1)\|^{p-2}   
\right)\\
=&  -\frac{1}{2} 
\left(
\frac{2}{3} \|\nabla\psi(v_1)\|^{p-2} + \frac{2}{3}\|\nabla\psi(v_2)\|^{p-2} - \frac{1}{3}\|\nabla\psi(v_3)\|^{p-2} 
\right).
\end{align}

On the other hand, in this setting we can get a reduced graph from hypergraph represented by the adjacency matrix $A$,
\begin{align}
  A = \left(
    \begin{array}{ccc}
      0 & 1/2 & 1/2 \\
      1/2 & 0 & 1/2 \\
      1/2 & 1/2 & 0 \\
    \end{array}
  \right).
\end{align}

We can compute graph gradient in~\cite{Zhou06} for $v_1$ and $v_2$ as follows;
\begin{align}
\nonumber
\nabla^{(g)} (v_1, v_2) & = \left(\frac{\sqrt{w(v_1,v_2)}}{\sqrt{d(v_2)}} \psi(v_2) - \frac{\sqrt{w(v_1,v_2)}}{\sqrt{d(v_1)}}\psi (v_1) \right)\\
&= \sqrt{\frac{1}{2}} (\psi(v_2) - \psi(v_1)),
\end{align}
and we get
\begin{align}
\nonumber
\| \nabla^{(g)} \psi(v_1) \| &= \Bigl( \sum_{ e \in E: e_{ [1] } = v } \frac{(\nabla^{(g)} \psi)^2(e)}{\delta_{e}!}  \Bigr)^{\frac{1}{2}} \\
&= \left( \frac{1}{2}  \left( \psi(v_2) - \psi(v_1) \right)^2 + \frac{1}{2}  \left( \psi(v_3) - \psi(v_1) \right)^2 \right)^{1/2},
\end{align}
where $(g)$ is superscripted for the operators for graphs.

From these results we can compute the $p$-Laplacian matrix for graph as
\begin{align}
l^{(g)}_p(v_1, v_2) &= -\frac{1}{2} \frac{w(v_1,v_2)}{\sqrt{d(v_1)d(v_2)}} ( \| \nabla^{(g)} \psi(v_1) \|^{p-2} + \| \nabla^{(g)} \psi(v_2) \|^{p-2} )\\
&= - \frac{1}{4}( \| \nabla^{(g)} \psi(v_1) \|^{p-2} + \| \nabla^{(g)} \psi(v_2) \|^{p-2} )
\end{align}

Let us think the case of $p=1$ and $\psi(v_1)=1, \psi(v_2) = 0,$ and $\psi(v_3) = 0$. We then obtain
\begin{align}
\| \nabla\psi(v_1) \| &= \frac{2}{\sqrt{3 \cdot 2}}, \\
\| \nabla\psi(v_2) \| &= \| \nabla\psi(v_3) \| = \frac{1}{\sqrt{3 \cdot 2}}.\\
l_1(v_1,v_2) &=  l_1(v_1,v_3) = -\frac{\sqrt{6}}{3}, \\
l_1(v_1,v_1) &= - (l_1(v_1,v_2) + l_1(v_1,v_3) ) = \frac{2\sqrt{6}}{3}, \\
\| \nabla^{(g)}\psi(v_1) \| &= 1,\\
\| \nabla^{(g)}\psi(v_2) \| &= \| \nabla^{(g)}\psi(v_3) \| = \frac{1}{\sqrt{2}},\\
l^{(g)}_1(v_1,v_2) &= l^{(g)}_1(v_1,v_3) = -\frac{1}{4} ( 1 + \sqrt{2} ),\\
l^{(g)}_1(v_1,v_1) &= - (l^{(g)}_1(v_1,v_2) + l^{(g)}_1(v_1,v_3) ) =  \frac{1}{2}\left( 1 + \frac{1}{\sqrt{2}} \right),
\end{align}
which yields $ \Delta_1 (\psi)(v) = 2\sqrt{6}/3 = 4/\sqrt{6}$ and $\Delta_1^{(g)} =  1/2 + 1/2\sqrt{2} $.
Additionally, for this setting we get $\Delta^{(u,B)}_{1} = \Delta^{(n,B)}_{1}  = \sum_{u \sim v} w(u,v) \xi_1(\psi(u) -\psi(v)) = 1/2$, where
\begin{align}
\Delta^{(u,B)}_p(v) &= \sum_{ u \sim v } w(u,v) \xi_p (\psi(u) - \psi(v))\\
\Delta^{(n,B)}_p(v) &= \sum_{ u \sim v } \frac{1}{d(v)} w(u,v) \xi_p (\psi(u) - \psi(v)).
\end{align}
We note that when $p=2$, our and Zhou's Laplacians would give the same values as $\Delta_2(v_1) = \Delta^{(g)}_2(v_1) = 1$. However, both unnormalized and normalized Laplacian in~\cite{pgraph} are not same as our and Zhou's Laplacian.

Our and Zhou's way to formalize Laplacian need normalizing factor $1/\sqrt{d(v)}$ for any $\psi(v)$, while Laplacians in~\cite{pgraph} do not have normalizing factor for the function. 
This means that our and Zhou's Laplacian and B\"{u}hler's Laplacian are not same.

These results show that graph $p$-Laplacian in~\cite{Zhou06} and~\cite{pgraph} constructed from graph reduced from hypergraph are not equal to our hypergraph $p$-Laplacian.

%


\end{document}